\newtheorem{theoremrestate}{Theorem}
\newcommand{\model}{CtrTab}
\title{Towards Synthesizing High-Dimensional Tabular Data with Limited Samples}
\author {
    % Authors
    Zuqing Li\textsuperscript{\rm 1},
    Junhao Gan\textsuperscript{\rm 1},
    Jianzhong Qi\textsuperscript{\rm 1}\thanks{Corresponding author.}
}
\begin{document}

\maketitle

\begin{abstract}
Diffusion-based tabular data synthesis models have yielded promising results. However, when the data dimensionality increases, existing models tend to degenerate and may perform even worse than simpler, non-diffusion-based models. This is because limited training samples in high-dimensional space often hinder generative models from capturing the distribution accurately. 
To mitigate the insufficient learning signals and to stabilize training under such conditions, we propose \model, a condition-controlled diffusion model that injects perturbed ground-truth samples as auxiliary inputs during training. This design introduces an implicit $L_2$ regularization on the model's sensitivity to the control signal, improving robustness and stability in high-dimensional, low-data scenarios. Experimental results across multiple datasets show that \model\ outperforms state-of-the-art models, with a performance gap in accuracy over 90\% on average.
\end{abstract}
\begin{links}
    \link{Code}{https://github.com/zuqingli0404/CtrTab}
\end{links}
\section{Introduction}

Tabular data synthesis is an important problem with a wide range of applications. 
A common motivation is to facilitate privacy-preserving data sharing, i.e., to use synthetic data in scenarios where access to real data is restricted due to privacy concerns. In recent years, tabular data synthesis has also been used to help address data scarcity~\citep{dallm,relddpm,survey2}, augmenting training datasets to satisfy the need of modern machine learning models which are often data hungry. Meanwhile, the database community is using synthesized data for system performance benchmarking~\cite{clavaddpm,DBMSsynthesis,sam}.
In this work, we focus on the non-privacy-sensitive settings, and we aim to synthesize data to enhance the performance of downstream tasks such as machine learning effectiveness~\citep{dataaugment}.

Early studies on tabular data synthesis are primarily based on statistical models~\citep{condensation,Fourierdecompositions,DPSynthesizer,gibbssamplers,Privbayes}. With the rise of deep learning, models based on GANs and diffusion~\citep{itsgan,octgan,tabddpm,relddpm,ctgan} are adopted. At the same time, to ensure the quality of synthesized data, studies introduce conditional generation~\citep{relddpm,ctgan,ctabgan+}, incorporating additional information to guide the  synthesis process. 

Existing studies~\citep{stasy,tabddpm,codi,relddpm,tabsyn} focus on datasets with a small number of columns (typically fewer than 50) and a large number of samples, where learning the underlying distribution is more tractable. 
A critical challenge underexplored is the difficulty posed by sparse, high-dimensional tabular data, i.e., tables with many (e.g., hundreds of) columns and only a few rows. 
\begin{figure}[t]
    \centering
    \includegraphics[width=0.7\linewidth]{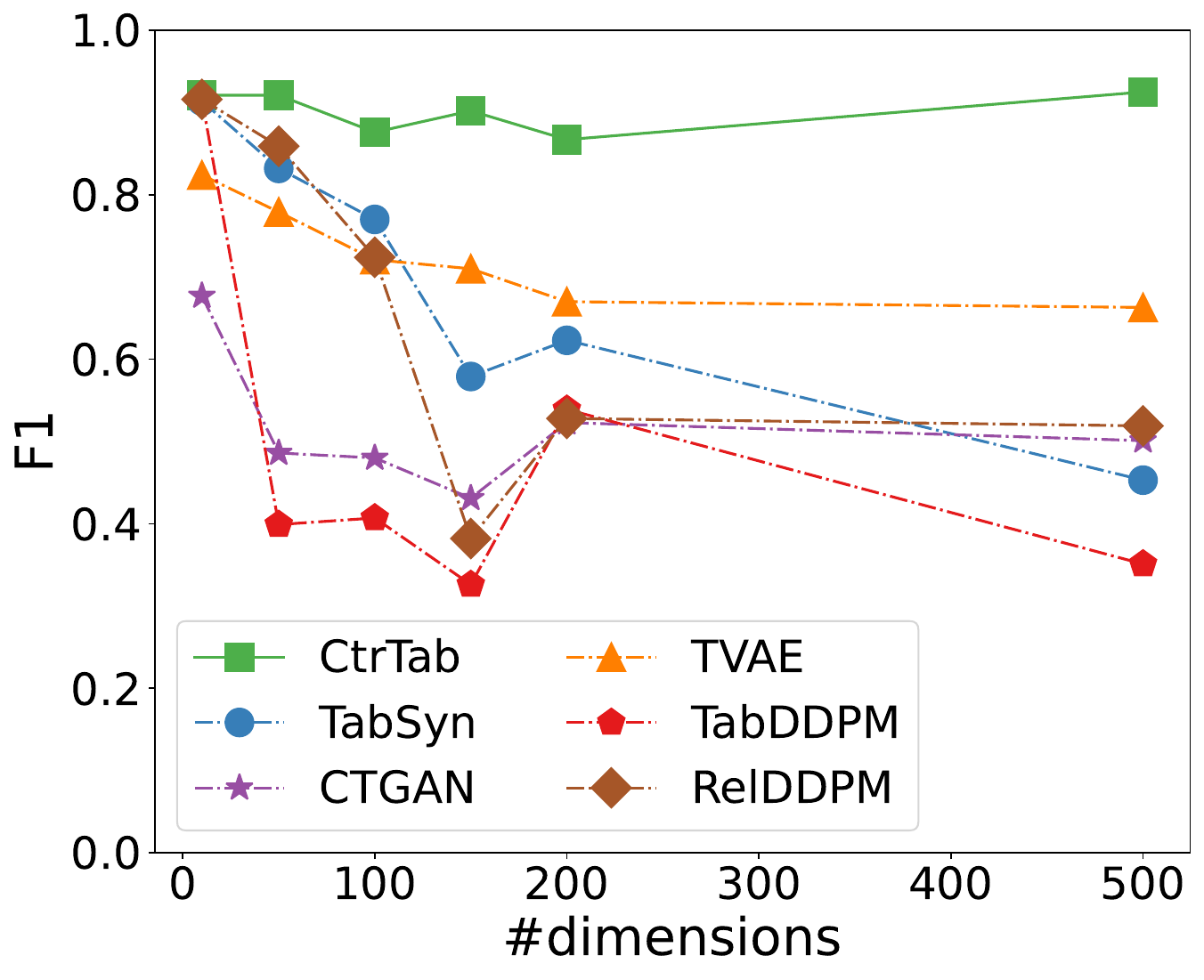}
    \caption{Challenges in tabular data synthesis over high-dimensional data. As the dimensionality increases, F1 scores of all existing models in machine learning tests decrease, while those of our model \model\ remain stable.}
    \label{fig:tabulardatageneration}
\end{figure} 

This situation, common in fields like biomedicine, suffers from the curse of dimensionality — as the number of features grows, data points become increasingly sparse in the feature space, and the distances between neighboring samples grow larger, making it  difficult for models to accurately capture the underlying distribution.
In addition, the limited number of samples poses the risk of overfitting, as models may memorize the training data instead of generalizing to unseen samples, further aggravating the challenge of reliable data synthesis in high-dimensional settings.
Even the performance of the state-of-the-art (SOTA) diffusion model TabSyn~\citep{tabsyn} for tabular data synthesis is still far from satisfactory under such settings. To verify this, we use data generation tools from \texttt{Scikit-learn} to generate tabular data, fixing the number of samples (rows) at 3,000 while varying the number of features (dimensions) from 10 to 500, with a class balance ratio of  0.5 for a binary classification task.
For each dataset, we use 80\% data to train recent tabular data synthesis models, and 20\% for testing (following the machine learning tests detailed in the experimental section). Figure~\ref{fig:tabulardatageneration} plots the test results in F1 score. There is an overall decreasing trend in F1 for all models (except \model\ which is ours), which is particularly obvious as the dimensionality reaches 500.

To address this issue, we propose \textbf{\model}, a condition-controlled diffusion model designed for high-dimensional, low-data regimes.
\model\ introduces a control module alongside the denoising network.
During training, each sample is perturbed with Laplace noise, serving as an auxiliary control signal.
This control input is encoded and integrated into the decoder of the denoising network.
Beyond improving controllability during generation, the control module provides an additional learning signal during training, enhancing the model's robustness.

For better generalization under limited training samples, we employ a noise injection training strategy that systematically perturbs training samples.
We theoretically show that this process is equivalent to introducing an implicit $L_2$ regularization on the model’s sensitivity to the control input, promoting smoother mappings and enhancing generalization beyond the sparse training data~\citep{noiseaccumulation}.

We conduct extensive experiments to show that while simple designs, e.g., perturbing data for augmentation or expanding model capacity, could offer some improvements,
they fall short of addressing the underlying challenges.
In contrast, \model\ outperforms state-of-the-art models across various datasets, validating the effectiveness of our design.

Our contributions are summarized as follows:
(1)~We propose a diffusion-based tabular data synthesis model named \model\ to address the challenge of sparse, high-dimensional data. 
    Unlike existing diffusion-based models~\cite{tabddpm,tabsyn}  which struggle significantly under such settings, \model\ introduces a \textit{control module} and a \textit{noise injection training strategy} that work together to improve model generalizability and robustness in complex tabular scenarios. %, leading to substantially better generative performance in complex tabular scenarios.
(2)~We provide a theoretical analysis showing that the proposed noise-based training is equivalent to $L_2$ regularization, where the noise scale flexibly controls the strength of regularization, enhancing model smoothness and stability.
(3)~We conduct experiments that extend tabular data synthesis to tables with up to 10,001 dimensions.
 The results show that machine learning models trained with data synthesized by \model\ are much more accurate than those trained with data synthesized by SOTA models, with a performance gain of over 90\% on average, confirming that \model\ is more effective in learning the data distribution.  

\section{Related Work}

\paragraph{Tabular Data Synthesis} 
Recent advances in tabular data synthesis have shifted from statistical approaches~\citep{condensation,Fourierdecompositions,DPSynthesizer,gibbssamplers,Privbayes} to deep generative models such as GANs~\citep{itsgan,octgan,tablegan,causaltgan,ctgan,ctabgan+}, large language models (LLMs)~\citep{MaskedLanguageModeling,HARMONIC} and diffusion models~\citep{stasy,tabddpm,codi,relddpm,clavaddpm,tabdiff,tabsyn,TabRep}. While GAN-based models suffer from training instability~\citep{wgan}, diffusion- and LLM-based models~\citep{MaskedLanguageModeling,HARMONIC} have shown improved sample quality and training robustness. Recent studies also consider missing-value imputation or minority-class data synthesis~\citep{imbalancedclassification,TabularDataImputation,Imb-FinDiff}. 

Most existing diffusion models focus on low-dimensional, dense tabular data and do not generalize well to high-dimensional, low-sample scenarios. 
Our model represents a latest development of the diffusion-based models that addresses this gap.

\paragraph{Conditional Tabular Data Synthesis}
Conditional generative models have been proposed for tabular data synthesis with additional constraints.  CTGAN~\citep{ctgan} incorporates class labels in the generator to produce data conditioned on the target class. CTAB-GAN+~\citep{ctabgan+} extends CTGAN to support both discrete and continuous data class labels. 
RelDDPM~\citep{relddpm} uses a classifier-guided \textit{conditional diffusion model}. It first trains an unconditional model to fit the input data distribution. Then, given a constraint, e.g., a target class label, it trains a classifier, the gradient of which is used to control sample generation. 
Our model does \emph{not} concern class-conditioned generation. Instead, we introduce a control module to guide learning under sparse, high-dimensional conditions. There are also works utilizing LLMs~\citep{resrag,HARMONIC}. We aim for lighter-weight solutions and do not consider such solutions further.

\paragraph{Learning High-Dimensional Distribution with Sparse Data}
High-dimensional data poses significant challenges for machine learning.
In \textit{probably approximately correct}  learning~\citep{paclearning}, the generalization error of a model depends on both the data dimensionality and the hypothesis class complexity.  
Typically, both the hypothesis space complexity and the required sample size grow exponentially as the dimensionality increases, making generalization more difficult. 
For generative models, increased dimensionality results in sparse data distributions \citep{patternrecognition}, making it difficult for models to capture the underlying data structure.

\section{Preliminaries}

\paragraph{Problem Statement}

Consider a table $\mathcal{T}$ of $N$ rows, where each row $x_{raw}$ consists of $D_\text{{num}}$ numerical features and $D_\text{{cat}}$ 
categorical features that correspond to variables $\mathbf{x}_{\text{num}}$ and $\mathbf{x}_{\text{cat}}$, respectively. 
Categorical variables are encoded into numerical representations (e.g., one-hot encoding) before being used as model input. Let the dimensionality of all categorical variables after encoding be $E(D_\text{{cat}})$. The total model input data dimensionality is then $D = E(D_\text{{cat}}) + D_\text{{num}}$. 
Each row of $\mathcal{T}$, i.e., a data sample, is expressed as $\mathbf{x} = [\mathbf{x}_\text{{num}}, \mathbf{x}_\text{{cat}}]$, where $\mathbf{x}_\text{{num}} \in \mathbb{R}^{D_\text{{num}}}$ and $\mathbf{x}_\text{{cat}}\in \mathbb{R}^{E(D_\text{{cat}})}$. 

Our aim is to train a generative model $p_{\theta}(\mathcal{T})$ such that the distribution of the generated data approximates that of the real data (i.e., rows) in $\mathcal{T}$. We focus on sparse, high-dimensional data, where $N \ll 2^D$. Here, $2^D$ represents the volume of the data space, which grows exponentially with $D$. We design our model based on diffusion models. Below, we briefly outline the core idea of diffusion models.

\paragraph{Denoising Diffusion Probabilistic Model}\label{sec:ddpm} 

The denoising diffusion probabilistic model (DDPM)~\citep{DDPM} (see Figure~\ref{fig:ddpm} in Appendix A) has two processes: a forward process and a reverse process, both Markov chains. 
In the \textbf{forward process}, noise is added to a data sample $\mathbf{x}_{0}$ from distribution $q(\mathbf{x}_{0})$:
{\small
\begin{align}
    & q(\mathbf{x_{t}}|\mathbf{x}_{{t-1}}) = \mathcal{N}(\mathbf{x_{{t}}};\sqrt{1-\beta_{t}}\mathbf{x_{{t-1}}},\beta_{t}\mathbf{I})\,, \quad \\
    &q(\mathbf{x_{{t}}}|\mathbf{x_{0}}) = \mathcal{N}(\mathbf{x_{t}};\sqrt{\Bar{\alpha_{t}}}\mathbf{x_{0}},(1-\Bar{\alpha_{t}})\mathbf{I}) \label{eq:ddpmforward}\,,
\end{align}}
where $t$ is the number of noise adding steps (timesteps), $\beta_{t}$ controls the variance of the noise, $\alpha_{t} = 1 - \beta_{t}$, and $ \Bar{\alpha_{t}} = \prod\nolimits_{i=1}^{t} \alpha_{t}$. 
In the \textbf{reverse process}, given a sample $\mathbf{x}_{t}$ with pure random noise which is usually from a Gaussian distribution,
a \emph{denoising network} (i.e., the diffusion model) learns to iteratively denoise $\mathbf{x}_{t}$ until it is restored to the initial sample $\mathbf{x}_{0}$. The reverse process can also be expressed as a Gaussian distribution through derivation:
{\small
\begin{align}
    q(\mathbf{x_{{t-1}}}|\mathbf{x}_{t},\mathbf{x}_0) =  \mathcal{N}(\mathbf{x}_{{t-1}}; \mathbf{\tilde \mu_{t}}(\mathbf{x}_{t}, \mathbf{x}_0), \tilde\beta_{t}) \label{eq:ddpmbackward}\,,
\end{align}
}
where $\mathbf{\tilde\mu}_{t}(\mathbf{x}_{t}, \mathbf{x}_0) = \frac{\sqrt{\bar\alpha_{{t-1}}}\beta_{t} }{1-\bar\alpha_{t}}\mathbf{x}_0 + \frac{\sqrt{\alpha_{t}}(1- \bar\alpha_{{t-1}})}{1-\bar\alpha_{t}} \mathbf{x}_{t}$ and  $\tilde\beta_{t} = \frac{1-\bar\alpha_{{t-1}}}{1-\bar\alpha_{t}}\beta_{t}$.

The model learning process aims to maximize the variational lower bound:
{\small
\begin{align}
    &\log(q(\mathbf{x})) \geq {E}_\text{q} \bigg[ \underbrace{\log p_\theta(\mathbf{x}_0|\mathbf{x}_1)}_{L_0}- \underbrace{D_{KL}({q(\mathbf{x}_{T}|\mathbf{x}_0)}\|{p(\mathbf{x}_{T})})}_{L_{T}}  - \nonumber \\
    &\sum_{t > 1} \underbrace{D_{KL}({q(\mathbf{x}_{{t-1}}|\mathbf{x}_{t},\mathbf{x}_0)}\|{p_\theta(\mathbf{x}_{{t-1}}|\mathbf{x}_{t})})}_{L_{{t-1}}} \bigg]\,, \label{eq:elbo}
\end{align}
}where $D_{KL}(\cdot)$ is the KL divergence, 
$q(\mathbf{x})$ the probability distribution of $\mathbf{x}_0$, $p_\theta$  the parameterized model $\theta$ to approximate the probability distribution of the reverse process, and $T$ the timestep at which noise is added, such that the original data distribution matches standard Gaussian distribution.

To maximize this lower bound means to minimize the KL divergence between $q(\mathbf{x}_{{t-1}}|\mathbf{x}_{t},\mathbf{x}_0)$ and $p_\theta(\mathbf{x}_{{t-1}}|\mathbf{x}_{t})$.
This objective further reduces to minimizing the sum of mean-squared errors between $\epsilon$ --- the ground-truth error from $\mathbf{x}_0$ to $\mathbf{x}_t$
 --- and $\epsilon_\theta$, the predicted noise by a denoising network: 
{\small
\begin{align}
    L_{\text{simple}}(\theta) = \mathbb{E}_{\mathbf{x}_0, t, \mathbf{\epsilon}}\left[ \| \mathbf{\epsilon} - \mathbf{\epsilon}_\theta(\mathbf{x}_{t}, t) \|^2 \right] .\label{eq:lsimple}
\end{align}}

Here, $t$ and $\mathbf{\epsilon}$ are generated randomly at training, while $\mathbf{x}_{{t}}$ is from adding noise to $\mathbf{x}_0$ with Eq.~\eqref{eq:ddpmforward}. 

Once trained, the denoising network $\epsilon_\theta$ is used for data generation (called the sampling process). It takes $\mathbf{x}_{{t}}$ and a timestep $t$ as input, and its goal is to denoise $\mathbf{x}_{t}$ iteratively back to $\mathbf{x}_0$ (i.e., a generated sample). Specifically, at each timestep $t$, it computes $\mathbf{x}_{{t-1}}$ from $\mathbf{x}_{t}$ as follows:
{\small
\begin{align}
     \mathbf{x}_{t-1} = \frac{1}{\sqrt{\alpha_t}}(\mathbf{x}_t - \frac{1 - \alpha_t}{\sqrt{1 - \bar{\alpha_t}}}\epsilon_{\theta}(\mathbf{x}_t,t)) + \sigma_t \mathbf{z}\,,
\end{align}}
where $\sigma_{t}$ is the \emph{noise scale} at timestep $t$, and $\mathbf{z}$ follows the standard Gausssian distribution.

\paragraph{Score-based Diffusion Generative Model} The initial diffusion model uses discrete timesteps. \emph{Score-based generative models}~\citep{scoresde} further introduce a continuous-time formulation using Stochastic Differential Equations (SDEs), where the forward and reverse processes are guided by a \emph{score function} $\nabla_{\mathbf{x}}\log p_t(\mathbf{x})$ as follows:
{\small
\begin{align}
    &{\rm d} \mathbf{x} =  \mathbf{f}(\mathbf{x}, t){\rm d}t + g(t) \; {\rm d} \mathbf{w}_t \,, \quad \nonumber \\
    &{\rm d} \mathbf{x}  = [\mathbf{f}(\mathbf{x}, t) - g^2(t) \nabla_{\mathbf{x}}\log p_t(\mathbf{x})] {\rm d}t  + g(t) \;{\rm d} \mathbf{w}_t \,,
\end{align}
}
where $\mathbf{f}$ and $g$ denote the drift and diffusion coefficients, respectively, and $\mathbf{w}$ is the standard Wiener process. The diffusion process is typically categorized into variance-preserving (VP) and variance-exploding (VE) types.
Our solution supports VE, with a training objective to minimize~\citep{tabsyn}:
{\small
\begin{align}
    &L(\theta) = \mathbb{E}_{\mathbf{x}_0} \mathbb{E}_{\mathbf{x}_{t}|\mathbf{x}_0} \Vert D_{\theta}(\mathbf{x}_{t}, t) - \nabla_{\mathbf{x}_{t}}\log p(\mathbf{x}_{t}|\mathbf{x}_0) )\Vert_2^2  \nonumber \\
    &\approx \mathbb{E}_{\mathbf{x}_0} \mathbb{E}_{\mathbf{x}_{t}|\mathbf{x}_0}  \mathbf{\Vert {\epsilon}}_{\theta}(\mathbf{x}_{t}, t) - {{\mathbf{\varepsilon}}} \Vert_2^2\,.
\end{align}} 

Like DDPM, the denoising network here predicts the noise at $t$. Our model thus can support both DDPM and score-based SDE, upon which existing diffusion-based tabular data synthesis models are built. We use ``diffusion model'' hereafter to refer to both types when the context is clear.

\section{Methodology}\label{sec:method}

As Figure~\ref{fig:ctddpms} shows,  \model\ consists of two branches, a denoising network and a control module. The control module can be applied to different diffusion-based models -- we use the SOTA model TabSyn~\cite{tabsyn}. With this module, \model\ is trained on noisy data in addition to raw input to learn the input distribution more effectively, as will be shown by our theoretical analysis in the next section.

\paragraph{Denoising Network}

The denoising network takes as input a noisy sample \( \mathbf{x}_t \) 
(which comes from an encoded representation of ground-truth sample \( \mathbf{x}_0 \) during training and from standard Gaussian noise during sampling) 
and a timestep \( t \). It outputs a predicted noise \( \epsilon_\theta \). 

We adopt TabSyn's denoising network design and latent encoding strategy (detailed in Appendix A as this is not our focus). Each raw sample $x_{raw}$ from table $\mathcal{T}$ is transformed into \( \mathbf{x}_0 \) using a model-agnostic embedding method.
The hidden layers consist of three fully connected layers with SiLU activation functions. %The layer dimensions are \( l \rightarrow 2l \), \( 2l \rightarrow 2l \), and \( 2l \rightarrow l \), \jz{what is $l$?} respectively. 
A final linear layer is applied to predict the noise.
Note that our contributions focus on the control module and noise-based training (detailed next). The denoising network is modular and can be replaced with other networks (e.g., RelDDPM~\cite{relddpm} in our experiments).

\begin{figure}[ht]
    \centering
    \includegraphics[width=\linewidth]{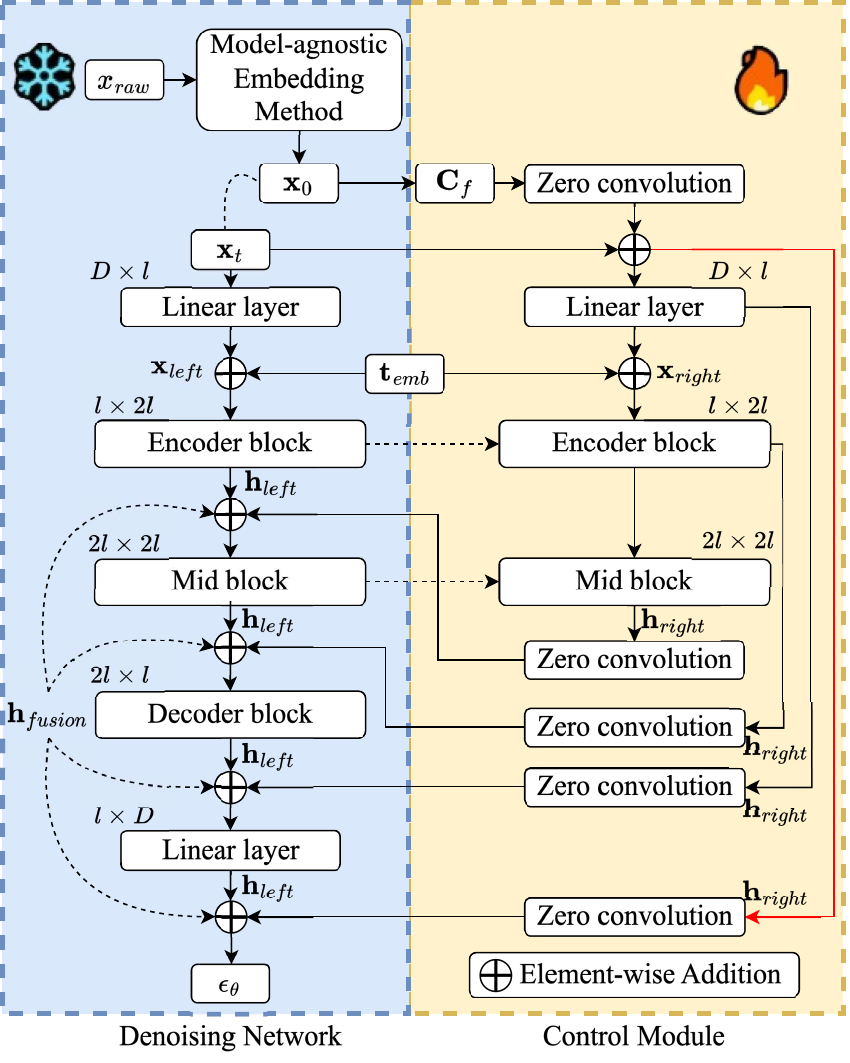}
    \caption{
        Overview of \model. The left (blue) is a denoising network, which receives the noisy input \( \mathbf{x}_t \) and the timestep \( t \), and predicts noise \( \epsilon_\theta \). 
        The right (yellow) is the control module, which encodes conditioning input \( \mathbf{C}_f \) and injects intermediate features via element-wise addition (\( \oplus \)). 
        All fusion operations are followed by zero convolution to match dimensions. 
        This modular design allows injecting conditioning signals without altering the diffusion backbone.
    }
    \label{fig:ctddpms}
    \vspace{-5mm}
\end{figure}

\paragraph{Control Module} 

We introduce a control module that receives a noisy version of the ground truth $\mathbf{x}_0$ as the conditioning input $\mathbf{C}_f$, enabling conditional generation to enhance model learning capability. This approach draws inspiration from classifier-free guidance (CFG)~\citep{classifierfreediffusionguidance,controlnet}, where conditions are injected into the diffusion model to guide data  synthesis. Unlike CFG, we enable fine-grained control via an auxiliary network to inject conditions into the denoising process.

\underline{Noisy input.}
We use Laplace noise to form $\mathbf{C}_{f}$ for its empirical effectiveness (detailed in experimental results): 
{\small
\begin{align}
     \mathbf{C}_{f} = \mathbf{x}_0 + \text{Laplace}(b)\,,
\end{align}
} 
where $\text{Laplace}(b)$ represents the Laplace noise with mean $0$ and variance $2b^2$ -- $b$ is the \emph{noise scale}. This noise-added input serves as a form of $L_2$ regularization during training, aiming to improve generalization with limited data.

The control module and the denoising network interact via \emph{input fusion} and \emph{control fusion}.

\underline{Input fusion.} The control module and the denoising network share the same input $\mathbf{x}_\text{t}$ and $t$, i.e., the sinusoidal timestep embedding~\citep{tabddpm,relddpm,tabsyn}:
{\small
\begin{equation}
     \mathbf{t}_{\text{emb}} = \text{Linear}(\text{SiLU}(\text{Linear}(\text{SinTimeEmb}(t))))\,, 
\end{equation}
}
the timestep embedding is fused with $\mathbf{x}_\text{t}$ for both modules: 
{\small
\begin{align}
& \mathbf{x}_{\text{left}} = \text{Linear}(\mathbf{x}_\text{t}) + \mathbf{t}_{\text{emb}}\,, \nonumber\\
& \mathbf{x}_{\text{right}} = \text{Linear}(\text{zero\_convolution}(\mathbf{C}_{f}) + \mathbf{x_{t}}) + \mathbf{t}_{\text{emb}} \,. 
\end{align}
}

Here, $\mathbf{x}_\text{left}$ is the output of the top linear layer of the denoising network, and $\mathbf{x}_\text{right}$ is that of the top \emph{zero convolution} and linear layers (see Figure~\ref{fig:ctddpms}).
A zero convolution layer, implemented as a $1 \times 1$ convolution with zero-initialized weights and biases, is used to match dimensions and preserve alignment. The purpose is to learn effective encoded information of the condition  before injecting it into the decoding layers of the denoising network to guide the output.

\underline{Control fusion.}
The embedding $\mathbf{x}_\text{right}$ then goes through an encoder block and a mid block, which are copied from the denoising network. The denoising network is pretrained and kept frozen when training the control module, to retain its strong data synthesis capability for denser, low-dimensional data. 
Let $\mathbf{h}_{\text{left}}$ be the hidden outputs of the denoising network, and $\mathbf{h}_{\text{right}}$ be the output of the counterpart blocks in the control module. These outputs are fused as:
{\small
\begin{align}
\mathbf{h}_\text{fusion} = \mathbf{h}_\text{left} + \text{zero\_convolution}(\mathbf{h}_\text{right})\,.
\end{align}
}

Note the red line in Figure~\ref{fig:ctddpms}, which connects the raw control module input directly to the output of the denoising network.
This design preserves low-level condition information and complements the  encoded high-level guidance, improving controllability and generation quality, especially in high-dimensional scenarios.

\begin{figure}[ht]
\centering
\begin{minipage}{0.48\textwidth}
\begin{algorithm}[H]
\caption{Training} \label{alg:training}
\small
\begin{algorithmic}
\FOR{each step}
    \STATE Sample $\mathbf{x}_0 \sim q(\mathbf{x}_0), t$, and $\mathbf{\epsilon}$
    \STATE Compute perturbed data $\mathbf{x}_t$ and $\mathbf{C}_f$
    \STATE Calculate loss $\left\| \mathbf{\epsilon} - \mathbf{\epsilon}_\theta(\mathbf{x}_{t}, \mathbf{C}_{f}, t) \right\|^2$
    \STATE Update the control module via backpropagation
\ENDFOR
\STATE \textbf{return} Trained \model\
\end{algorithmic}
\end{algorithm}
\end{minipage}
\hfill
\begin{minipage}{0.48\textwidth}
\vspace{-2.3ex} 
\begin{algorithm}[H]
\caption{Sampling} \label{alg:sampling}
\small
\begin{algorithmic}
    \STATE Sample $\mathbf{x}_T$, $\mathbf{C}_f$
    \FOR{$t=T, \dotsc, 1$}
      \STATE Predict $\mathbf{\epsilon}_\theta(\mathbf{x}_t, \mathbf{C}_f, t)$ using \model\ output
      \STATE Obtain $\mathbf{x}_{t-1}$ through the reverse equation of \model\
    \ENDFOR
    \STATE \textbf{return} $\mathbf{x}_0$
\end{algorithmic}
\end{algorithm}
\end{minipage}
\end{figure}

\paragraph{Training and Inference}
The training objectives of \model\ with DDPM and score-based SDE are:
{\small
\begin{align}
     &L_{DDPM}(\theta) = \mathbb{E}_{\mathbf{x_0}, t, \mathbf{\epsilon}, \mathbf{C}_{f}}{ \| \mathbf{\epsilon} - \mathbf{\epsilon}_\theta(\mathbf{x}_t, t, \mathbf{C}_{f}) \|^2}\,, \quad \nonumber \\
     &L_{SDE}(\theta) = \mathbb{E}_{\mathbf{x_0}}\mathbb{E}_{\mathbf{x_t}|\mathbf{x_0}}\mathbb{E}_{\mathbf{C}_{f}}{ \| \mathbf{\epsilon} - \mathbf{\epsilon}_\theta(\mathbf{x}_t, t, \mathbf{C}_{f}) \|^2}\,.
\end{align}
}

As noted earlier, during training, gradients flow through the entire model, but only the control module’s parameters are updated while the denoising network remains frozen.

Algorithm~\ref{alg:training} and~\ref{alg:sampling} summarize the training and  inference (i.e., sampling) of \model, respectively.

\section{Theoretical Results} \label{sec:theproof}

We show that adding appropriate noise to the condition leads to a training objective analogous to one with the $L_2$ regularization, which is known to help prevent overfitting to the true data distribution~\citep{l2regularization}. For simplicity, in this section, we use $\mathbf{C}_{f}$ to denote the noise-free condition, and $\mathbf{C}_{f} + \mathbf{\tilde{\epsilon}}$ the noise-added condition.
We present a summary of the theoretical results here and full proofs in Appendix B.
\begin{theoremrestate}\label{thm:1}
Let the training objective of \model\ be to minimize $\mathcal{L} = \mathbb{E}_{\mathbf{x_0}, t, \mathbf{\epsilon}, \mathbf{C}_{f}}{ \| \mathbf{\epsilon} - \mathbf{\epsilon_\theta}(\mathbf{x_t}, t, \mathbf{C}_{f}) \|^2}$, and let the training objective with a noise added condition be to minimize $\tilde{\mathcal{L}} = \mathbb{E}_{\mathbf{x_0}, t, \mathbf{\epsilon}, \mathbf{C}_{f}, \mathbf{\tilde{\epsilon}}}{ \| \mathbf{\epsilon} - \mathbf{\epsilon_\theta}(\mathbf{x_t}, t, \mathbf{C}_{f} + \mathbf{\tilde{\epsilon}}) \|^2}$, where $\tilde{\epsilon}$ is a noise drawn from a distribution with mean $0$ and variance $\eta^2$. 
Then, $\tilde{\mathcal{L}} = \mathcal{L} + \eta^2 \mathcal{L}^R$ holds, 
where $\mathcal{L}^R$ is a regularization term in Tikhonov form.
\end{theoremrestate}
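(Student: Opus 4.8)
The plan is to adapt the classical argument that training with input noise is equivalent to a generalized Tikhonov regularizer (penalizing derivatives of the network), here applied to the control input $\mathbf{C}_f$ rather than to the primary input $\mathbf{x}_t$. I would treat the injected-noise scale as small (it is governed by $b$, so $\eta^2 \propto b^2$), freeze every source of randomness except $\tilde{\epsilon}$, and study the inner expectation $\mathbb{E}_{\tilde{\epsilon}}\,\| \epsilon - \epsilon_\theta(\mathbf{x}_t, t, \mathbf{C}_f + \tilde{\epsilon}) \|^2$, reinstating the outer expectations over $\mathbf{x}_0, t, \epsilon, \mathbf{C}_f$ only at the end.

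First I would Taylor-expand the network output about the noise-free condition to second order. Writing $J = \partial \epsilon_\theta / \partial \mathbf{C}_f$ for the Jacobian with respect to the control input and $H$ for the corresponding (tensor) Hessian, we have $\epsilon_\theta(\mathbf{x}_t, t, \mathbf{C}_f + \tilde{\epsilon}) = \epsilon_\theta(\mathbf{x}_t, t, \mathbf{C}_f) + J\tilde{\epsilon} + \tfrac12 H[\tilde{\epsilon}, \tilde{\epsilon}] + O(\|\tilde{\epsilon}\|^3)$. Denoting the noise-free residual by $r = \epsilon - \epsilon_\theta(\mathbf{x}_t, t, \mathbf{C}_f)$ and substituting into the squared norm, expansion gives $\| r \|^2 - 2\, r^\top J \tilde{\epsilon} - r^\top H[\tilde{\epsilon}, \tilde{\epsilon}] + \| J \tilde{\epsilon} \|^2 + O(\|\tilde{\epsilon}\|^3)$.

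Next I would take $\mathbb{E}_{\tilde{\epsilon}}$ term by term, using that $\tilde{\epsilon}$ has mean $0$ and isotropic covariance $\eta^2 I$. The zeroth-order term reproduces the original integrand $\|r\|^2$; the linear term $-2\,r^\top J \tilde{\epsilon}$ vanishes because $\mathbb{E}[\tilde{\epsilon}] = 0$; and the leading quadratic term contributes $\mathbb{E}\| J \tilde{\epsilon} \|^2 = \mathrm{tr}(J^\top J\, \mathbb{E}[\tilde{\epsilon}\tilde{\epsilon}^\top]) = \eta^2\, \mathrm{tr}(J^\top J)$, i.e.\ $\eta^2$ times the squared Frobenius norm of the Jacobian. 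Reinstating the outer expectations then yields $\tilde{\mathcal{L}} = \mathcal{L} + \eta^2 \mathcal{L}^R + O(\eta^3)$, with the Tikhonov-form regularizer $\mathcal{L}^R = \mathbb{E}\big[\, \mathrm{tr}\big( (\partial \epsilon_\theta / \partial \mathbf{C}_f)^\top (\partial \epsilon_\theta / \partial \mathbf{C}_f) \big) \,\big]$ penalizing the model's sensitivity to the control signal, exactly the claimed form.

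The main obstacle is the residual--curvature cross term $-\, r^\top H[\tilde{\epsilon}, \tilde{\epsilon}]$, whose expectation is also $O(\eta^2)$: it equals $-\eta^2 \sum_k r^\top \big( \partial^2 \epsilon_\theta / \partial C_{f,k}^2 \big)$, a residual-weighted trace of the Hessian. To land on a clean, positive-definite Tikhonov penalty I would argue this term is subdominant because it is linear in the residual $r$, so it vanishes in expectation once the model fits its target (the residual being zero-mean and approximately uncorrelated with the output curvature near a minimizer), whereas the Jacobian term $\mathrm{tr}(J^\top J)$ survives as a strictly nonnegative penalty. I would therefore state the equality as holding to leading order in $\eta^2$, equivalently for small noise scale $b$, and either fold the residual-weighted curvature contribution into $\mathcal{L}^R$ or discard it under the near-optimal-fit assumption. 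Being explicit about this order-of-approximation, and about the isotropy of the noise covariance that produces the factor $\eta^2 I$, is the delicate part of the proof.
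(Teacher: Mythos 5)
Your proposal is correct and follows essentially the same route as the paper: a second-order Taylor expansion in the control perturbation, use of the zero-mean and isotropic second-moment conditions to kill the linear term and extract the $\eta^2\,\mathrm{tr}(J^\top J)$ penalty, and an argument that the residual-weighted Hessian term vanishes because the optimal predictor equals the conditional mean of the noise, leaving the pure Tikhonov form. The only differences are notational (matrix/trace form versus the paper's component-wise sums) and your slightly more explicit bookkeeping of the $O(\eta^3)$ error, which matches the paper's small-noise assumption.
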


\begin{proof}[Proof Sketch]
\underline{Expansion of the training objectives. } 
    Following prior work~\cite{trainwithnoise,fapproximation}, we define $y(\mathbf{C}_f) := \epsilon_\theta(\mathbf{x}_t, t, \mathbf{C}_{f})$ to simplify the notation. Then,   
    $\mathcal{L} = \mathbb{\mathbb{E}}_{\mathbf{x}_0, t, \epsilon, \mathbf{C}_{f}}{ \| \epsilon - \epsilon_\theta(\mathbf{x}_t, t, \mathbf{C}_{f}) \|^2}$
    can be expanded as:
    {\small
\begin{align}
      \mathcal{L} &= \int\int\int\int \{ y(\mathbf{C}_{f}) - \epsilon \}^2 p(\epsilon|t,\mathbf{x}_0, \mathbf{C}_{f})p(t,\mathbf{x}_0, \mathbf{C}_{f})\, dt\, d\mathbf{x}_0 \nonumber \\
      &\, d\epsilon\, d\mathbf{C}_{f}\,. \label{eq:E1}
\end{align}
}
Also, $\tilde{\mathcal{L}} = \mathbb{E}_{\mathbf{x}_0, t, \epsilon, \mathbf{C}_{f}, \tilde{\epsilon}}{ \| \epsilon - \epsilon_\theta(\mathbf{x}_t, t, \mathbf{C}_{f} + \tilde{\epsilon}) \|^2}$ becomes:
{\small
\begin{align}
      \tilde{\mathcal{L}} &= \int\int\int\int\int\{ y(\mathbf{C}_{f}+\tilde{\epsilon}) - \epsilon \}^2p(\epsilon|t,\mathbf{x}_0, \mathbf{C}_{f})p(t,\mathbf{x}_0, \mathbf{C}_{f}) \nonumber \\
      &p(\tilde{\epsilon})\, dt\, d\mathbf{x}_0\, d\epsilon\, d\mathbf{C}_{f}\, d\tilde{\epsilon}. \label{eq:Etilde1}
\end{align}}

\noindent\underline{Some preliminaries. }
We expand $y(\mathbf{C}_{f}+\tilde{\epsilon})$ using a Taylor series, assuming that the noise amplitude is small such that terms of order $O(\tilde{\epsilon}^3)$ and above can be omitted: 
{\small
\begin{align}
      y(\mathbf{C}_{f}+\tilde{\epsilon}) &= y(\mathbf{C}_{f}) + \sum_i\tilde{\epsilon_i}\left. \frac{\partial y}{\partial \mathbf{C}_{f,i}} \right|_{\tilde{\epsilon}=0} + \frac{1}{2}\sum_i\sum_j\tilde{\epsilon_i}\tilde{\epsilon_j} \nonumber \\
      &\left. \frac{\partial^2 y}{\partial \mathbf{C}_{f,i}\partial \mathbf{C}_{f,j}} \right|_{\tilde{\epsilon}=0}+O(\tilde{\epsilon}^3) \label{eq:tayerseries1}
\end{align}}
This small-noise assumption is both theoretically and practically motivated. Practically, large-magnitude noise may overwhelm the original data features and hinder model training by masking meaningful patterns, as evidenced in Figure \ref{fig:sub1}, where the model's performance drops when the noise scale increases. Theoretically, assuming small noise levels facilitates tractable analysis --- a common practice in classical regularization literature~\cite{trainwithnoise}.

By the fact that $\tilde{\epsilon}$ is chosen from a distribution with mean $0$ and variance $\eta^2$, we obtain:
\begin{equation}
\small
\int \tilde{\epsilon}_i p(\tilde{\epsilon})\,d\tilde{\epsilon} = 0, 
\int \tilde{\epsilon}_i \tilde{\epsilon}_j p(\tilde{\epsilon})\,d\tilde{\epsilon} = \eta^2 \delta_{ij},\ 
\delta_{ij} =
\begin{cases}
1 & \text{if } i = j \\
0 & \text{if } i \neq j
\end{cases}.
\label{eq:eplisontilde1}
\end{equation}
\noindent\underline{The training objective with noise is a regularization. }
By substituting Eq.~\eqref{eq:tayerseries1} into Eq.~\eqref{eq:Etilde1}, we have:
{\small
\begin{align}
    \tilde{\mathcal{L}} &= \mathcal{L} + \eta^2 \mathcal{L}^R\,, \text{where} \label{eq:16}
    \end{align}
    \vspace{-5mm}
    \begin{align}
      \mathcal{L}^R &= \int\int\int\int\sum_i \{ (\frac{\partial y}{\partial \mathbf{C}_{f,i}})^2 + (y(\mathbf{C}_{f})-\epsilon)\frac{\partial^2 y}{\partial \mathbf{C}_{f,i}^2} \} \nonumber \\
      &p(\epsilon|t,\mathbf{x}_0, \mathbf{C}_{f})p(t,\mathbf{x}_0, \mathbf{C}_{f})\, dt\, d\mathbf{x}_0\, d\epsilon\, d\mathbf{C}_{f}. \label{eq:et71}
\end{align}
}
\noindent\underline{The training objective with noise resembles  $L_2$ regularization.}
We define two terms:
{\small
\begin{align}
&\langle\epsilon \mid t, \mathbf{x}_0, \mathbf{C}_{f}\rangle = \int \epsilon\, p(\epsilon \mid t, \mathbf{x}_0, \mathbf{C}_{f})\, d\epsilon, \quad \nonumber \\
&\langle\epsilon^2 \mid t, \mathbf{x}_0, \mathbf{C}_{f}\rangle = \int \epsilon^2\, p(\epsilon \mid t, \mathbf{x}_0, \mathbf{C}_{f})\, d\epsilon.
\label{eq:eps_expectation}
\end{align}}
Then, Eq.~\eqref{eq:E1} is expanded as follows:
{\small
\begin{align}
    & \mathcal{L} = \int\int\int\int \{ y(\mathbf{C}_{f}) - \langle\epsilon | t, \mathbf{x}_0, \mathbf{C}_f\rangle \}^2 p(\epsilon|t,\mathbf{x}_0, \mathbf{C}_f) \nonumber \\
    &p(t,\mathbf{x}_0, \mathbf{C}_{f})\, dt\, d\mathbf{x}_0\, d\epsilon\, d\mathbf{C}_{f} + \int\int\int\int\{ \langle\epsilon^2 | t, \mathbf{x}_0, \mathbf{C}_f\rangle \nonumber \\
    &-\langle\epsilon | t, \mathbf{x}_0, \mathbf{C}_{f}\rangle^2 \}p(\epsilon|t,\mathbf{x}_0, \mathbf{C}_{f})p(t,\mathbf{x}_0, \mathbf{C}_{f})\, dt\, d\mathbf{x}_0\, d\epsilon\, d\mathbf{C}_{f}.\label{eq:19}
\end{align}}

Observe that in Eq.~\eqref{eq:19}, only the first integral term involves the parameters of a neural network (i.e.,  $y(\mathbf{C}_f) =\epsilon_\theta(\mathbf{x}_t, t, \mathbf{C}_{f})$, the denoising network to be trained), while the second term only depends on the ground-truth noise.
Therefore, $\mathcal{L}$ is minimized when 
$y(\mathbf{C}_f) = \langle\epsilon | t, \mathbf{x}_0, \mathbf{C}_f\rangle$.

Also, recall that $\tilde{\mathcal{L}} = \mathcal{L} + \eta^2 \mathcal{L}^R$ by Eq.~\eqref{eq:16}.
Consider the second term of $\mathcal{L}^R$ in Eq.~\eqref{eq:et71} and denote it as $\mathcal{L}_2^R$. Then, $\mathcal{L}_2^R$ can be rewritten as follows:
{\small
\begin{align}
    \mathcal{L}^R_{2} &= \int\int\int\int\sum_i \{ y(\mathbf{C}_f) - \langle\epsilon | t, \mathbf{x}_0, \mathbf{C}_f\rangle \} \frac{\partial^2 y}{\partial \mathbf{C}_{f,i}^2} \nonumber \\
    &p(\epsilon|t,\mathbf{x}_0, \mathbf{C}_f)p(t,\mathbf{x}_0, \mathbf{C}_f)\, dt\, d\mathbf{x}_0\, d\epsilon\, d\mathbf{C}_f.
\end{align}}
Thus, when $\mathcal{L}$ is minimized at $y(\mathbf{C}_f) = \langle\epsilon | t, \mathbf{x}_0, \mathbf{C}_f\rangle$, $\mathcal{L}_2^R = 0$. 
In this case, $\mathcal{L}^R$ can be rewritten as:
{\small
\begin{align}
    \mathcal{L}^R & = \int\int\int\sum_i (\frac{\partial y}{\partial \mathbf{C}_{f,i}})^2 p(t,\mathbf{x}_0, \mathbf{C}_f)\, dt\, d\mathbf{x}_0\, d\mathbf{C}_f\,, \label{eq:er}
\end{align}}
which corresponds to the Tikhonov form. 
\end{proof}

\section{Experiments}

\paragraph{Datasets} We use real-world datasets: \textbf{GE}, \textbf{CL}, \textbf{MA}, \textbf{ED}, \textbf{UN}, \textbf{UG}, and \textbf{EG}. These datasets, as summarized in Table~\ref{tab:datasets} and detailed in Appendix C, have been chosen for their large number of feature dimensions (up to 241) relative to the number of rows (a few thousands).

\paragraph{Competitors} We compare \model\ with six baseline models including state-of-the-art (SOTA) diffusion-based tabular data synthesis models: \textbf{SMOTE}~\citep{smote}, \textbf{TVAE}~\citep{ctgan}, \textbf{CTGAN}~\citep{ctgan}, \textbf{TabDDPM}~\citep{tabddpm}, \textbf{RelDDPM}~\citep{relddpm}, and \textbf{TabSyn}~\citep{tabsyn} (SOTA). These models are detailed in Appendix C, together with implementation details of these models and our \model\ model.

\begin{table*}[ht] 
\setlength{\tabcolsep}{4pt}
    \centering 
    {\small
    %\resizebox{0.99\textwidth}{!}{%
	\begin{tabular}{lccccc|r||cc|r}
            \toprule[1pt]
            \textbf{Method} &  \textbf{GE} & \textbf{CL} & \textbf{MA} & \textbf{ED} & \textbf{UN} & \textbf{Avg. Gap} & \textbf{UG} & \textbf{EG} & \textbf{Avg. Gap}\\
            \midrule 
             & AUC $\uparrow$ & AUC $\uparrow$ & AUC $\uparrow$ & AUC $\uparrow$ & AUC $\uparrow$ & $\% \downarrow$ & RMSE $\downarrow$ & RMSE $\downarrow$ & $\% \downarrow$\\
                                    \midrule
                                    Real    &   0.896 & 1 & 0.999 & 0.990 & 0.986 & $0\%$ & 0.036 & 0.115 & $0\%$\\ \midrule
                                    SMOTE    &  \underline{0.865} & \underline{0.999} & \underline{0.996} & \textbf{0.990} & \underline{0.976} & \underline{$0.98\%$} & \underline{0.132} & \underline{0.118} & \underline{$134.64\%$} \\
                                    TVAE      &  - & - & - & 0.825 & 0.819 & $16.80\%$ & 0.775& 0.219 & $1071.61\%$\\
                                    CTGAN     &  0.133 & 0.653 & 0.765 & 0.383 & 0.534 & $50.09\%$ & 0.754& 0.266 & $1062.87\%$\\
                                    TabDDPM   &  0.504 & 0.550 & 0.805 & 0.459 & 0.618 & $39.83\%$ & 2.216&0.346 & $3128.21\%$\\
                                    RelDDPM   &  0.839 & 0.665 & 0.687 & 0.981 & 0.930 & $15.54\%$ & 0.463& 0.122 & $596.10\%$\\
                                    TabSyn    &  0.791 & 0.984 & \underline{0.996} & 0.952 & 0.864 & $5.97\%$& 0.267& 0.166 & $343.01\%$\\ \midrule
                                    
                                    \textbf{\model\ (Ours)} & \textbf{0.890}\textsuperscript{*} & \textbf{1}\textsuperscript{*} & \textbf{0.999}\textsuperscript{*} & \underline{0.988} & \textbf{0.979}\textsuperscript{*} & $\textbf{0.32\%}$& \textbf{0.069}\textsuperscript{*} & \textbf{0.114}\textsuperscript{*} & $\textbf{45.83\%}$\\
            \toprule[1.0pt]
            & F1 $\uparrow$ & F1 $\uparrow$ & F1 $\uparrow$ & F1 $\uparrow$ & F1 $\uparrow$ & $\% \downarrow$  & R2 $\uparrow$ & R2 $\uparrow$ & $\% \downarrow$\\ 
                                    \midrule
                                     Real   &  0.637 & 0.991 & 0.993 & 0.928 & 0.923 & $0\%$& 0.998 & 0.646 &$0\%$\\ \midrule
                                    SMOTE    & \underline{0.598} & \underline{0.968} & \underline{0.991} & \textbf{0.919} & \underline{0.889} & \underline{$2.66\%$}& \underline{0.970} & \underline{0.628} & \underline{$2.80\%$}\\
                                    TVAE     & - & - & - & 0.673 & 0.604 & $31.02\%$ & $<0$ & $<0$ & -\\
                                    CTGAN    & 0.465 & 0.190 & 0.884 & 0.404 & 0.580 & $42.49\%$& 0.031 & $<0$ & $96.89\%$\\
                                    TabDDPM  & 0.092 & 0.018 & 0.872 & 0.546 & 0.620 & $53.98\%$& $<0$ & $<0$ & -\\
                                    RelDDPM  & 0.554 & 0.334 & 0.871 & 0.905 & 0.841 & $20.60\%$& 0.634& 0.603 & $21.56\%$\\
                                    TabSyn   & 0.450 & 0.845 & 0.983 & 0.849 & 0.711 & $15.32\%$& 0.879& 0.270 & $35.06\%$\\ \midrule
                                    
                                    \textbf{\model\ (Ours)} & \textbf{0.635}\textsuperscript{*} & \textbf{0.983}\textsuperscript{*} & \textbf{0.994}\textsuperscript{*} & \underline{0.918} & \textbf{0.898}\textsuperscript{*} & $\textbf{0.98\%}$ & \textbf{0.991}\textsuperscript{*} & \textbf{0.657}\textsuperscript{*} & $\textbf{0.35\%}$ \\
		\bottomrule[1.0pt] 
		\end{tabular}
        }
        \caption{ Machine learning test results: The GE, CL, MA, ED, and UN datasets are used for classification, with results in AUC and F1. The UG and EG datasets are for regression, with results in RMSE and R2. Symbol `-' indicates cases where the generative model collapsed, resulting in only a single class of generated samples, or negative R2 values such that the average gap becomes invalid. The best results are in boldface, while the second best are underlined. Symbol `\textsuperscript{*}' denotes values where \model\ significantly outperforms \textit{both} top baselines SMOTE and TabSyn ($p < 0.05$ in t-tests).}

\label{tab:mlperformance}
\end{table*}

\subsection{Results}\label{sec:results}

\paragraph{Machine Learning Tests}\label{sec:mltests}
Following existing work~\citep{stasy,codi,tabsyn}, we test the effectiveness of \model\ mainly through machine learning tasks: (1) We train each model with the training set of each dataset. (2) Once trained, we use each model to synthesize a dataset of the same size of the respective training set. (3) We use the original training set and the synthesized set  to separately train a downstream classifier or regression model (XGBoost and XGBoostRegressor, ``\textbf{downstream model}'' hereafter). (4) We evaluate these trained downstream models on the test set. 

Following prior work~\citep{tabsyn}, we report classification results in the Area Under the ROC Curve (\textbf{AUC}) and \textbf{F1}, and regression results using the Root Mean Squared Error (\textbf{RMSE}) and R-Squared  (\textbf{\boldmath $R2$}).

Table~\ref{tab:mlperformance} summarizes the results, where ``Real'' denotes the downstream model trained on the original data; each model (e.g., \model) denotes the performance of the downstream model trained on data synthesized by that model. 
``Avg.~Gap'' denotes the (relative) performance gap between a model and ``Real'', averaged across the same type (classification or regression) of datasets. A smaller gap means a better model that better fits the original distribution. 

Our model \model\ reports the best results in almost all cases, except on ED, where SMOTE is marginally higher in AUC. The average gap in AUC, F1, RMSE, and R2 of \model\ are $67.35\%$, $63.16\%$, $65.96\%$, and $87.50\%$ lower than those of the best baseline model (SMOTE), respectively. Comparing with the SOTA model TabSyn, the gains are even higher, i.e., $94.64\%$, $93.60\%$, $86.64\%$, and $99.00\%$.

These results demonstrate the challenges for  existing diffusion-based models to learn from sparse, high-dimensional  data, and the effectiveness of \model\ in addressing such challenges. SMOTE, while being an early model, shows high effectiveness, because it generates new samples through adding noise (directly) to existing samples which resembles our idea. TVAE and CTGAN are based on VAE and GAN, which also suffer from the data sparsity issue.

\paragraph{Ablation Study}\label{sec:ablation} We further show the importance of the control module of \model\ by comparing \model\ with six alternative variants of TabSyn: 
(1)~\textbf{Train$\times2$} doubles the number of training epochs for TabSyn. 
(2)~\textbf{Data$\times2$} duplicates each training sample with a Laplace noise of scale $0.01$. 
(3)~\textbf{Model$\times2$} duplicates each training sample as above and doubles the number of model parameters by expanding the hidden layers of TabSyn to match that of \model\ (\model\ is $1.8 \times$ the size of TabSyn).
(4)~\textbf{NoiseCond} applies the same noisy $\mathbf{x}_0$ signal (used in \model's control module) as an additional input to TabSyn, without structural changes.
(5)~\textbf{Dropout-Reg} applies Dropout with a rate of 0.1 to the hidden layers of TabSyn during training, serving as a standard regularization baseline.
(6)~\textbf{JointTrain} trains the control module and the diffusion model jointly in a single stage. This setting evaluates whether our staged training strategy contributes to performance improvements by better stabilizing the learning of the control signal.
We also compare with \textbf{w/o-lastfusion}, i.e., \model\ without the last $h_{fusion}$ connection (the red line in Figure~\ref{fig:ctddpms}) to the denoising network, to verify the importance of this fusion connection.

\begin{table}[ht]
\centering
\small

\setlength{\tabcolsep}{1.2pt}
\begin{tabular}{lcccccccc}
\toprule
\textbf{Method} & \textbf{GE} & \textbf{CL} & \textbf{MA} & \textbf{ED} & \textbf{UN} & \textbf{UG} & \textbf{EG} & \textbf{Average} \\
\midrule
\textbf{Metric} & F1$\uparrow$ & F1$\uparrow$ & F1$\uparrow$ & F1$\uparrow$ & F1$\uparrow$ & R2$\uparrow$ & R2$\uparrow$ & - \\
\midrule
Train$\times2$ & 0.443 & 0.831 & 0.988 & 0.852 & 0.698 & 0.860 & 0.390 & 0.746 \\
Data$\times2$ & 0.463 & 0.871 & 0.987 & 0.865 & 0.820 & 0.937 & 0.458 & 0.801 \\
Model$\times2$ & 0.522 & 0.872 & 0.988 & 0.892 & 0.816 & 0.884 & 0.569 & 0.822 \\
NoiseCond & 0.427 & 0.778 & 0.985 & 0.878 & 0.790 & 0.872 & 0.530 & 0.751 \\
Dropout-Reg & 0.395 & 0.794 & 0.989 & 0.835 & 0.697 & 0.812 & 0.429 & 0.707 \\
JointTrain & 0.525 & 0.970 & 0.977 & 0.857 & 0.732 & 0.766 & 0.569 & 0.772 \\
\shortstack[l]{w/o-lastfusion} & 0.619 & 0.982 & 0.992 & 0.909 & 0.883 & 0.805 & 0.655 & 0.835 \\
\textbf{\model} & \textbf{0.635} & \textbf{0.983} & \textbf{0.994} & \textbf{0.918} & \textbf{0.898} & \textbf{0.991} & \textbf{0.657} & \textbf{0.868} \\
\bottomrule
\end{tabular}
\caption{Ablation study results.}

\label{tab:differentmodules}
\end{table}

We repeat the machine learning tests as above and report the results in F1 and R2 in Table~\ref{tab:differentmodules}. Results in AUC and RMSE share similar patterns and are in Appendix D. \model\ outperforms all these variants consistently, confirming the importance of the control module, our staged training strategy, and the last fusion connection. %and is consistent with our theoretical analysis. 

\underline{Impact of Noise Type.} We also replace Laplace noise in our control module with Gaussian and uniform noise. 
We find that while using Laplace noise typically leads to the best accuracy, the gain is often not too large. This confirms the robustness of \model\ and the effectiveness of using Laplace noise. Detailed results are provided in Appendix~D.

\paragraph{Parameter Study} Figure~\ref{fig:sub1} shows the downstream model accuracy (F1 and R2) when the Laplace noise scale in \model\ is varied from $0$ to $1000$. 
As the noise scale increases, model accuracy decreases at start due to the impact of regularization by the noise scale (variance), which follows our theoretical results. When the noise grows further, the control module gradually fails, the regularization becomes ineffective, and the curves rise back until 
converging to the performance of the original diffusion model (i.e., \model\ without the control module, ``NC'' in Figure~\ref{fig:sub1}).

Figure~\ref{fig:sub1} also shows results where the noise scale is 0, which means to train \model\ without additive noise on the control signal $\mathbf{C}_f$. We see that model performance drops comparing with using small Laplace noise as noted above. This supports our theoretical finding that noise injection implicitly imposes $L_2$ regularization, promoting smoother mappings and better generalization.

\begin{figure}[t]
    \centering
    \subfloat[Impact of noise scale\label{fig:sub1}]{
        \includegraphics[width=0.45\linewidth]{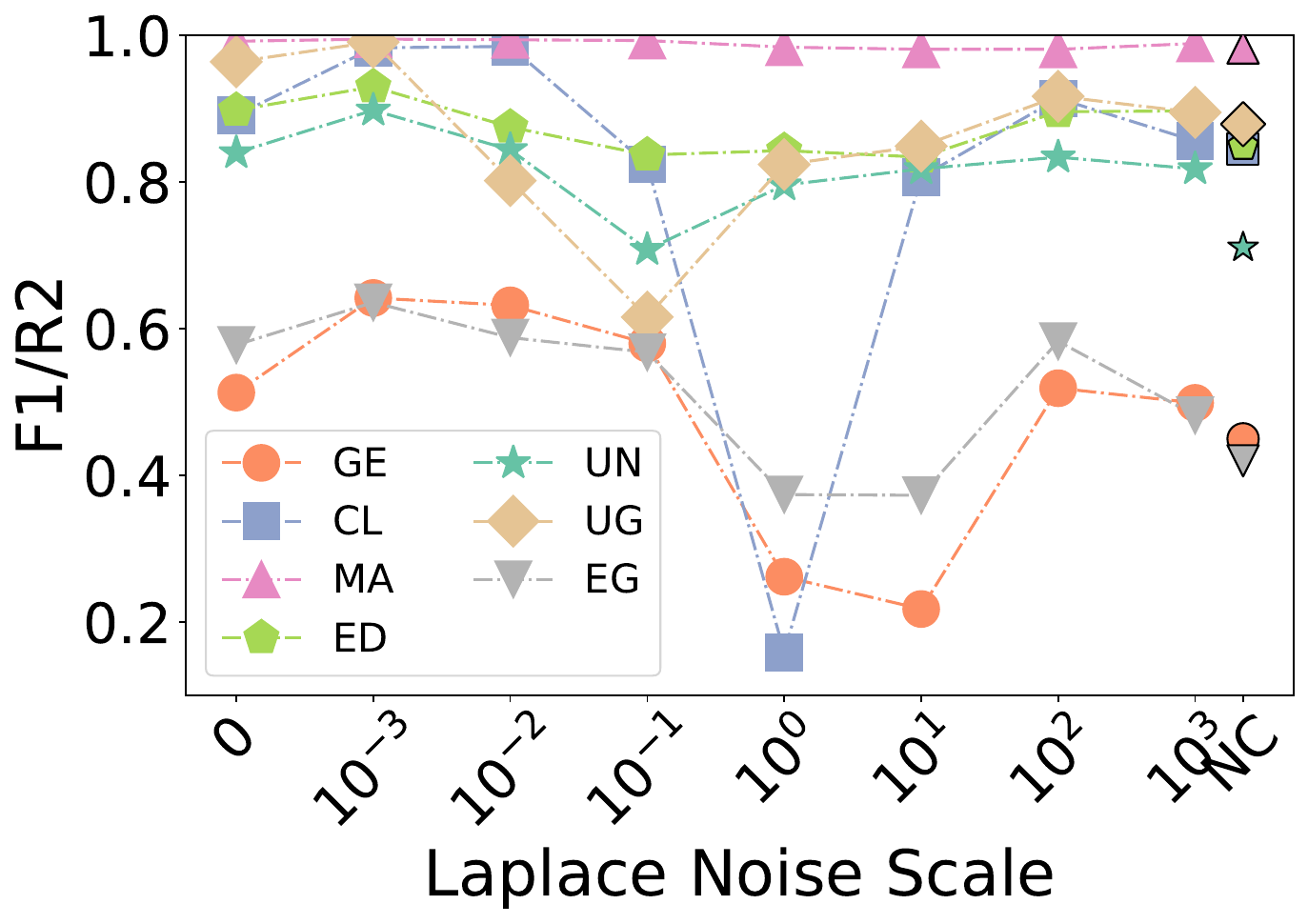}
    }
    \hfill
    \subfloat[\model\ on RelDDPM\label{fig:sub2}]{
        \includegraphics[width=0.45\linewidth]{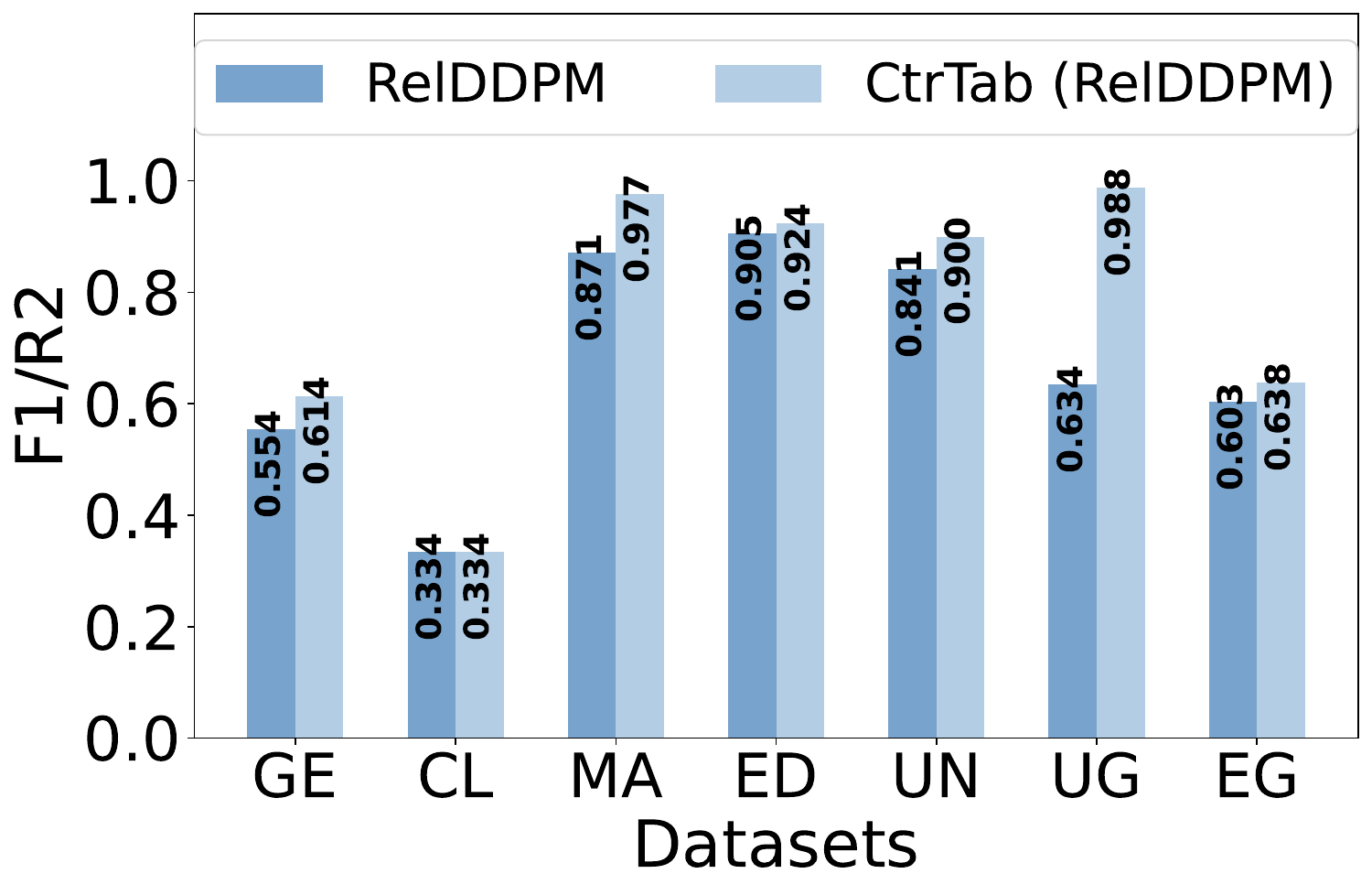}
    }
    \caption{Impact of noise scale and the control module.}
    \label{fig:combined}
\end{figure}

\paragraph{Control Module Applicability}
Our control module is not tied to TabSyn. We further integrate the control module with RelDDPM~\citep{relddpm}, a representative DDPM-based model. 
RelDDPM uses the standard diffusion forward and reverse processes (Eqs.~\ref{eq:ddpmforward} and~\ref{eq:ddpmbackward}) with a classifier-guided conditional generation.
In our high-dimensional setting, the classifier is not required. We remove it and only adopt RelDDPM's denoising network to replace the denoising module in \model.
This effectively substitutes the SDE-based diffusion backbone with a DDPM-style architecture, while keeping the control module intact. 
The accuracy (F1 and R2) results are shown in Figure~\ref{fig:sub2}, where \model\ (with RelDDPM) also outperforms RelDDPM consistently. This verifies the applicability of our control module. 

\paragraph{Case Study} We explore an extremely high-dimensional setting uncommon in the literature, to assess model robustness. We test \model\ on \textbf{ST}~\cite{stadyn} and \textbf{AC}~\cite{arcene} with 1,084 and 10,001 dimensions (4,998 and 100 rows, see Appendix~C), respectively, against the two diffusion baselines. 
Figure~\ref{fig:hbar} shows downstream classification performance, confirming the robustness of \model\ in this extreme setting. 

On AC, the classifier trained on real data (``Real-AC'') performs poorly in F1, due to the presence of 3,000 noisy dimensions. In contrast, the synthetic samples, especially those generated by \model, yield much higher F1 scores. 
This improvement is not only because the generative process captures more generalizable patterns and tends to suppress noisy or spurious correlations, but also because \model\  is trained with implicit regularization, which encourages the generator to focus on informative structures in the data. 

Real-AC in AUC is not as bad. AUC measures a model’s capability to rank positive and negative samples differently, while F1 is impacted by the classification threshold. 
As a result, AUC is less sensitive to noise and threshold selection.
\begin{figure}[ht]
    \centering
    \includegraphics[width=0.8\linewidth]{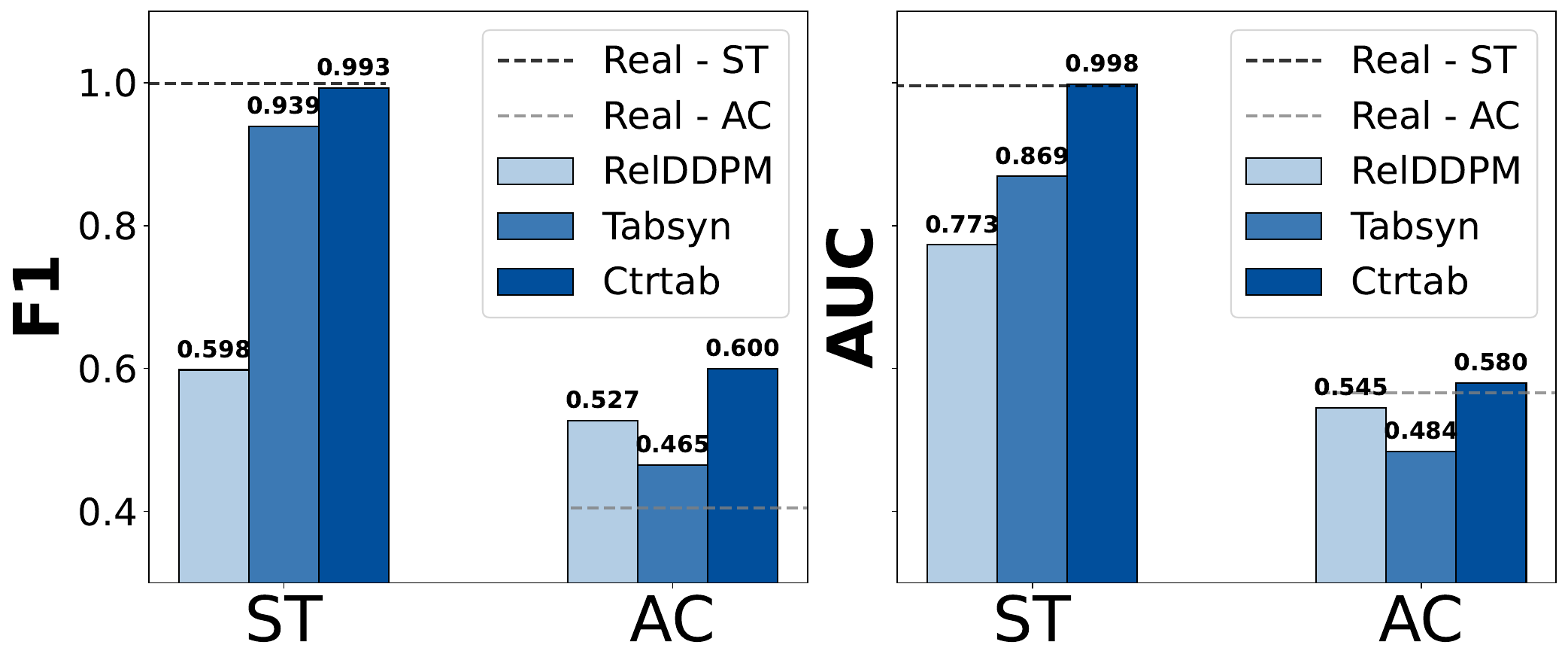}
    \caption{Case study on real-world extremely high-dimensional dataset. The dashed lines correspond to models trained on original data.}
    \label{fig:hbar}
    \vspace{-5mm}
\end{figure}

\paragraph{Additional Results} In the appendix, we further include results on model training and inference times, model performance with varying percentages of training data and on non-high-dimensional (i.e., conventional) datasets, distribution visualization results, results on distance to closest records and the impact of using different types of noise. \label{sec:experiments}

\section{Conclusion}

We proposed \model, a model to enhance the fitting capability of diffusion generative models towards high-dimensional  tabular data with limited samples. Through an explicit noise-conditioned control and training with a method similar to $L_2$ regularization, \model\ synthesizes high-quality tabular data as evidenced by experiments over real datasets. The results show that machine learning models trained with data synthesized by \model\ are much more accurate than those trained with data synthesized by existing models including the SOTA, with an accuracy gain of over 90\% on average.

We extended tabular data synthesis to tables with up to 10,001 dimensions. Future work could scale to even higher dimensions. This work focuses on advancing the quality and controllability of  tabular data synthesis in settings where privacy is not a constraint. Exploring how our methods interact with privacy constraints is left as future work.

\bibliography{main}
\clearpage

\clearpage

\appendix

\onecolumn

\section*{Appendix}

\subsection{A Additional Model Details}
\label{app:modeldetails}

\begin{figure}[ht]
    \centering
    \includegraphics[width=\linewidth]{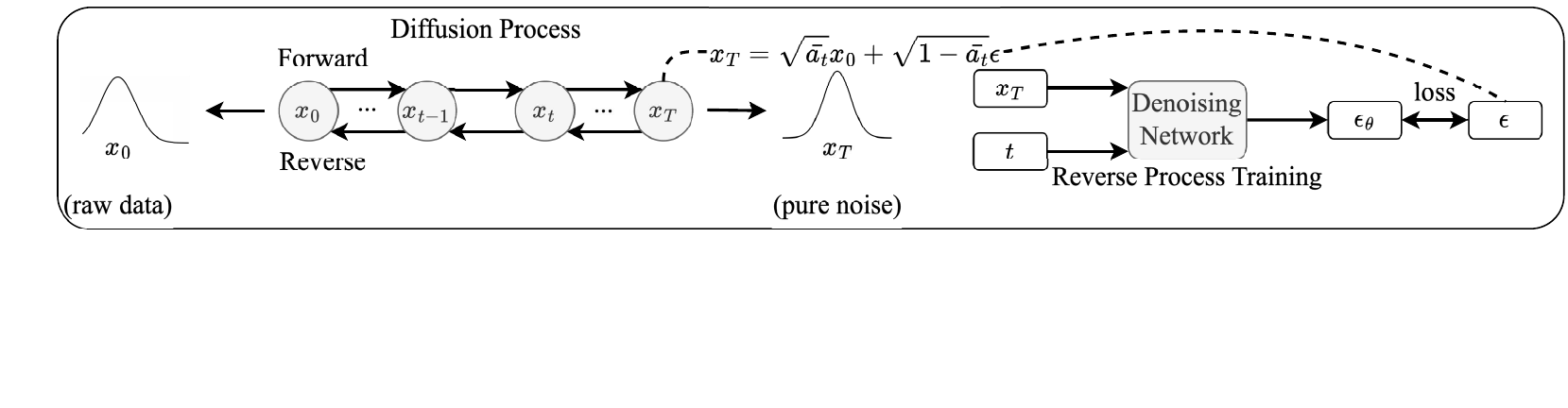}
    \vspace{-20mm}
    \caption{Overview of denoising diffusion model.}
    \label{fig:ddpm}
\end{figure}

\paragraph{Denoising Diffusion Probabilistic Model} Figure~\ref{fig:ddpm} illustrate the forward and the reverse process of DDPM.  The left part illustrates the forward diffusion process, where clean data \( \mathbf{x}_0 \) is gradually corrupted into noise \( \mathbf{x}_T \) via a series of Gaussian perturbations. The closed-form expression \( \mathbf{x}_T = \sqrt{\bar{\alpha}_t}\mathbf{x}_0 + \sqrt{1 - \bar{\alpha}_t}\epsilon \) suggests that \( \mathbf{x}_T \) can be directly sampled at any timestep \( t \). The right part shows the reverse training process, where the denoising network learns to predict the added noise \( \epsilon \) from \( \mathbf{x}_T \) and the timestep \( t \). The training objective minimizes the discrepancy between the predicted noise \( \epsilon_\theta \) and the true noise \( \epsilon \). 

\paragraph{Data Preprocessing of \model} Following existing models~\citep{tabddpm,tabsyn}, for numerical columns,  cells with missing values are filled with the average of that column, and for categorical columns, cells with missing values are filled with a new, ``value-missing'' category.
For tabular data encoding, we follow TabSyn~\cite{tabsyn}, since our model reuses its denoising network. Numerical features are projected with a linear layer while  categorical features are embedded via one-hot encoding followed by an embedding lookup, resulting in a unified feature representation. Subsequently, a Variational Autoencoder (VAE) is employed to encode the feature representation into a latent vector. The latent vector is flattened and used as the initial input $\mathbf{x}_0$ for the diffusion model. This input encoding process is illustrated in Figure~\ref{fig:encoding}. Notably, our overall model design  is input encoding-agnostic -- \model\ can flexibly adapt to different denoising network architectures and employ their corresponding input encoding schemes. For example, when using the denoising network from RelDDPM, categorical variables are encoded using ordinary embedding layers instead of one-hot encoding. 
\begin{figure}[!ht]
    \centering
    \includegraphics[width=1\textwidth]{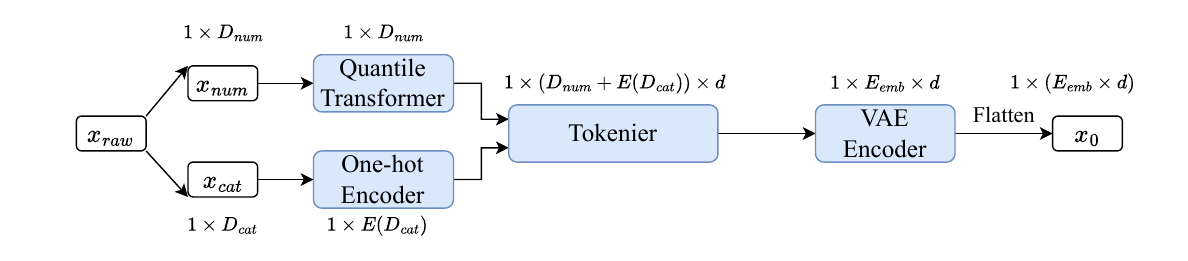} 
    %\vspace{-10mm}
    \caption{Input encoding of \model}
    \label{fig:encoding}
\end{figure}

\subsection{B Detailed Proofs} \label{sec:detailedproof}

This section presents a detailed proof for Theorem~\ref{thm:1}.
For simplicity, we prove this result based on the training objective derived from DDPM~\cite{DDPM}.
The proof for the case with the training objective of score-based SDE is analogous and omitted for conciseness. For simplicity, we use $\mathbf{C}_{f}$ to denote the noise-free condition, and $\mathbf{C}_{f} + \mathbf{\tilde{\epsilon}}$ the noise-added condition.

For easy reference, we restate the theorem here:
\begin{theoremrestate}
Let the training objective of \model\ be to minimize $\mathcal{L} = \mathbb{E}_{\mathbf{x}_0, t, \mathbf{\epsilon}, \mathbf{C}_{f}}{ \| \mathbf{\epsilon} - \mathbf{\epsilon_\theta}(\mathbf{x}_t, t, \mathbf{C}_{f}) \|^2}$, and let the training objective with a noise added condition be to minimize $\tilde{\mathcal{L}} = \mathbb{E}_{\mathbf{x}_0, t, \mathbf{\epsilon}, \mathbf{C}_{f}, \mathbf{\tilde{\epsilon}}}{ \| \mathbf{\epsilon} - \mathbf{\epsilon_\theta}(\mathbf{x}_t, t, \mathbf{C}_{f} + \mathbf{\tilde{\epsilon}}) \|^2}$, where $\tilde{\epsilon}$ is a noise drawn from a distribution with mean $0$ and variance $\eta^2$. 
Then, $\tilde{\mathcal{L}} = \mathcal{L} + \eta^2 \mathcal{L}^R$ holds, 
where $\mathcal{L}^R$ is a regularization term in Tikhonov form.
\end{theoremrestate}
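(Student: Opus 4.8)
The plan is to adapt the classical ``training with noise as regularization'' argument to the setting where the perturbation is applied to the conditioning input $\mathbf{C}_f$ rather than to the network input. First I would introduce the shorthand $y(\mathbf{C}_f) := \epsilon_\theta(\mathbf{x}_t, t, \mathbf{C}_f)$ and write both objectives as explicit integrals against the joint density $p(\epsilon \mid t, \mathbf{x}_0, \mathbf{C}_f)\, p(t, \mathbf{x}_0, \mathbf{C}_f)$, with $\tilde{\mathcal{L}}$ carrying the additional factor $p(\tilde{\epsilon})$ and an extra integration over the perturbation $\tilde{\epsilon}$. The entire argument then hinges on comparing the integrand $\{y(\mathbf{C}_f + \tilde{\epsilon}) - \epsilon\}^2$ of $\tilde{\mathcal{L}}$ with the integrand $\{y(\mathbf{C}_f) - \epsilon\}^2$ of $\mathcal{L}$.

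The core computation is a second-order Taylor expansion of $y(\mathbf{C}_f + \tilde{\epsilon})$ about $\tilde{\epsilon} = 0$, discarding terms of order $O(\tilde{\epsilon}^3)$ under the small-noise assumption. Substituting this expansion into $\{y(\mathbf{C}_f + \tilde{\epsilon}) - \epsilon\}^2$ and expanding the square produces the zeroth-order term $\{y(\mathbf{C}_f) - \epsilon\}^2$ together with terms linear and quadratic in $\tilde{\epsilon}$. I would then integrate over $\tilde{\epsilon}$ and invoke the moment conditions: the mean-zero property kills every term linear in a single $\tilde{\epsilon}_i$, while the isotropic, independent-coordinate variance condition $\int \tilde{\epsilon}_i \tilde{\epsilon}_j\, p(\tilde{\epsilon})\, d\tilde{\epsilon} = \eta^2 \delta_{ij}$ collapses the double sum over the Hessian to a single diagonal sum scaled by $\eta^2$. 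What survives is exactly $\mathcal{L} + \eta^2 \mathcal{L}^R$, where $\mathcal{L}^R$ contains a gradient-squared piece $\sum_i (\partial y / \partial \mathbf{C}_{f,i})^2$ and a residual-weighted second-derivative piece $(y(\mathbf{C}_f) - \epsilon)\, \partial^2 y / \partial \mathbf{C}_{f,i}^2$.

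The step I expect to be the main obstacle --- and the one that turns a generic Tikhonov-like expression into the clean statement --- is eliminating the second-derivative (Hessian-trace) piece of $\mathcal{L}^R$. My plan here is to first rewrite $\mathcal{L}$ by completing the square in $\epsilon$, separating it into a term $\{y(\mathbf{C}_f) - \langle \epsilon \mid t, \mathbf{x}_0, \mathbf{C}_f\rangle\}^2$ that depends on the network parameters and a variance term $\langle \epsilon^2 \mid \cdot\rangle - \langle \epsilon \mid \cdot\rangle^2$ that does not. This identifies the minimizer of $\mathcal{L}$ as the conditional mean $y^\star(\mathbf{C}_f) = \langle \epsilon \mid t, \mathbf{x}_0, \mathbf{C}_f\rangle$. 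Substituting this optimality condition into the second-derivative piece, the weight $(y(\mathbf{C}_f) - \langle\epsilon\mid\cdot\rangle)$ vanishes pointwise, so that piece integrates to zero. I would therefore conclude that at the optimum $\mathcal{L}^R$ reduces to the pure gradient penalty $\int \sum_i (\partial y / \partial \mathbf{C}_{f,i})^2\, p(t, \mathbf{x}_0, \mathbf{C}_f)\, dt\, d\mathbf{x}_0\, d\mathbf{C}_f$, which is precisely the Tikhonov form, yielding $\tilde{\mathcal{L}} = \mathcal{L} + \eta^2 \mathcal{L}^R$ with the claimed regularizer. The delicate point to state carefully is that this reduction is an optimum-level argument, so the equivalence should be read as holding for the trained (minimizing) network rather than as an exact algebraic identity for arbitrary $\theta$.
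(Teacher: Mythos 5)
Your proposal follows essentially the same route as the paper's proof: the same shorthand $y(\mathbf{C}_f)$, the same second-order Taylor expansion with the $O(\tilde{\epsilon}^3)$ truncation, the same moment conditions to kill the linear term and diagonalize the quadratic one, and the same completion-of-the-square argument identifying the minimizer as the conditional mean so that the Hessian-trace piece of $\mathcal{L}^R$ vanishes at the optimum. Your closing remark that the reduction to the pure gradient penalty holds only at the minimizing network, not as an identity for arbitrary $\theta$, is correct and is in fact stated slightly more carefully than in the paper.
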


\begin{proof}
\textbf{Expansion of the training objectives.} 

For simplicity, define $y(\mathbf{C}_f) = \epsilon_\theta(\mathbf{x}_t, t, \mathbf{C}_{f})$.
Then  
$\mathcal{L} = \mathbb{E}_{\mathbf{x}_0, t, \epsilon, \mathbf{C}_{f}}{ \| \epsilon - \epsilon_\theta(\mathbf{x}_t, t, \mathbf{C}_{f}) \|^2}$
can be expanded as follows:
\begin{align}
      \mathcal{L} &= \int\int\int\int\{ y(\mathbf{C}_f) - \epsilon \}^2p(t,\mathbf{x_0},\epsilon, \mathbf{C}_f)\, dt\, d\mathbf{x_0}\, d\epsilon\, d\mathbf{C}_f \nonumber \\
      &= \int\int\int\int\{ y(\mathbf{C}_f) - \epsilon \}^2p(\epsilon|t,\mathbf{x}_0, \mathbf{C}_f)p(t,\mathbf{x}_0, \mathbf{C}_f)\, dt\, d\mathbf{x}_0\, d\epsilon\, d\mathbf{C}_f \label{eq:E}
\end{align}

Next, we expand $\tilde{\mathcal{L}} = \mathbb{E}_{\mathbf{x}_0, t, \epsilon, \mathbf{C}_{f}, \tilde{\epsilon}}{ \| \epsilon - \epsilon_\theta(\mathbf{x}_t, t, \mathbf{C}_{f} + \tilde{\epsilon}) \|^2}$ as:
\begin{align}
      \tilde{\mathcal{L}} &= \int\int\int\int\int\{ y(\mathbf{C}_f+\tilde{\epsilon}) - \epsilon \}^2p(t,\mathbf{x}_0,\epsilon, \mathbf{C}_f,\tilde{\epsilon})\, dt\, d\mathbf{x}_0\, d\epsilon\,d\mathbf{C}_f\,d\tilde{\epsilon} \nonumber\\
      &= \int\int\int\int\int\{ y(\mathbf{C}_f+\tilde{\epsilon}) - \epsilon \}^2p(t,\mathbf{x}_0,\epsilon, \mathbf{C}_f)p(\tilde{\epsilon})\, dt\, d\mathbf{x}_0\, d\epsilon\, d\mathbf{C}_f\, d\tilde{\epsilon}\nonumber\\
      &\; \text{($\tilde{\epsilon}$ is independent from other variables)}\nonumber \\
      &= \int\int\int\int\int\{ y(\mathbf{C}_f+\tilde{\epsilon}) - \epsilon \}^2p(\epsilon|t,\mathbf{x}_0, \mathbf{C}_f)p(t,\mathbf{x}_0, \mathbf{C}_f)p(\tilde{\epsilon})\, dt\, d\mathbf{x}_0\, d\epsilon\, d\mathbf{C}_f\, d\tilde{\epsilon}\nonumber\\
      &\;(\text{decompose the joint distribution}) \label{eq:Etilde}
\end{align}
\textbf{Some Preliminaries. }
Expanding $y(\mathbf{C}_f+\tilde{\epsilon})$ with Taylor series, and assuming that the noise amplitude is small such that any term of order $O(\tilde{\epsilon}^3)$ or higher can be neglected, we have:
\begin{align}
      y(\mathbf{C}_f+\tilde{\epsilon}) &= y(\mathbf{C}_f) + \sum_i\tilde{\epsilon_i}\left. \frac{\partial y}{\partial \mathbf{C}_{f,i}} \right|_{\tilde{\epsilon}=0} + \frac{1}{2}\sum_i\sum_j\tilde{\epsilon_i}\tilde{\epsilon_j}\left. \frac{\partial^2 y}{\partial \mathbf{C}_{f,i}\partial \mathbf{C}_{f,j}} \right|_{\tilde{\epsilon}=0}+O(\tilde{\epsilon}^3) \label{eq:tayerseries}
\end{align}
By the fact that $\tilde{\epsilon}$ is chosen from a distribution with mean $0$ and variance $\eta^2$, we have the following equalities:
\begin{align}
      & \int \tilde{\epsilon_i}p(\tilde{\epsilon})d\tilde{\epsilon}=0 
      & \int \tilde{\epsilon_i}\tilde{\epsilon_j}p(\tilde{\epsilon})d\tilde{\epsilon}=\eta^2\delta_{ij}
      \text{, where } \delta_{ij} =
        \left\{\begin{array}{ccc}
            1 & i=j\\
            0 & i\neq j
        \end{array}
        \right. \label{eq:eplisontilde}
\end{align}
\textbf{Showing that the noised training objective takes the form of regularization. }
By substituting Equation~\eqref{eq:tayerseries} to Equation~\eqref{eq:Etilde}, we obtain:
\allowdisplaybreaks
\begin{align}
      \tilde{\mathcal{L}} &= \int\int\int\int\int\{ y(\mathbf{C}_f) + \sum_i\tilde{\epsilon_i}\left. \frac{\partial y}{\partial \mathbf{C}_{f,i}} \right|_{\tilde{\epsilon}=0} + \frac{1}{2}\sum_i\sum_j\tilde{\epsilon_i}\tilde{\epsilon_j}\left. \frac{\partial^2 y}{\partial \mathbf{C}_{f,i}\partial \mathbf{C}_{f,j}} \right|_{\tilde{\epsilon}=0}+O(\tilde{\epsilon}^3) - \epsilon \}^2 \nonumber\\
      &p(\epsilon|t,\mathbf{x}_0, \mathbf{C}_f)p(t,\mathbf{x}_0, \mathbf{C}_f)p(\tilde{\epsilon})\, dt\, d\mathbf{x}_0\, d\epsilon\, d\mathbf{C}_f\, d\tilde{\epsilon}\;\text{(substitution)} \label{eq:et1} \\
      &= \int\int\int\int\int\{(y(\mathbf{C}_f) - \epsilon) + \sum_i\tilde{\epsilon_i}\left. \frac{\partial y}{\partial \mathbf{C}_{f,i}} \right|_{\tilde{\epsilon}=0} + \frac{1}{2}\sum_i\sum_j\tilde{\epsilon_i}\tilde{\epsilon_j}\left. \frac{\partial^2 y}{\partial \mathbf{C}_{f,i}\partial \mathbf{C}_{f,j}} \right|_{\tilde{\epsilon}=0}\}^2\nonumber\\
      &p(\epsilon|t,\mathbf{x}_0, \mathbf{C}_f)p(t,\mathbf{x}_0, \mathbf{C}_f)p(\tilde{\epsilon})\, dt\, d\mathbf{x}_0\, d\epsilon\, d\mathbf{C}_f\, d\tilde{\epsilon}\;\nonumber\\
      &\text{(rearrange the terms inside the squared expression and omit the $O(\tilde{\epsilon}^3)$ term)}\label{eq:et2} \\
      &=\int\int\int\int\int\{
      \underbrace{(y(\mathbf{C}_f) - \epsilon)^2}_{(A)} + \underbrace{(\sum_i\tilde{\epsilon_i}\left. \frac{\partial y}{\partial \mathbf{C}_{f,i}} \right|_{\tilde{\epsilon}=0})^2}_{(B)} + \underbrace{(\frac{1}{2}\sum_i\sum_j\tilde{\epsilon_i}\tilde{\epsilon_j}\left. \frac{\partial^2 y}{\partial \mathbf{C}_{f,i}\partial \mathbf{C}_{f,j}} \right|_{\tilde{\epsilon}=0})^2}_{(C)} + \nonumber\\
      &\underbrace{2(y(\mathbf{C}_f) - \epsilon)\sum_i\tilde{\epsilon_i}\left. \frac{\partial y}{\partial \mathbf{C}_{f,i}} \right|_{\tilde{\epsilon}=0}}_{(D)}+ \underbrace{(y(\mathbf{C}_f) - \epsilon)\sum_i\sum_j\tilde{\epsilon_i}\tilde{\epsilon_j}\left. \frac{\partial^2 y}{\partial \mathbf{C}_{f,i}\partial \mathbf{C}_{f,j}} \right|_{\tilde{\epsilon}=0}}_{(E)} + \nonumber\\
      &\underbrace{\sum_i\tilde{\epsilon_i}\left. \frac{\partial y}{\partial \mathbf{C}_{f,i}} \right|_{\tilde{\epsilon}=0}\sum_i\sum_j\tilde{\epsilon_i}\tilde{\epsilon_j}\left. \frac{\partial^2 y}{\partial \mathbf{C}_{f,i}\partial \mathbf{C}_{f,j}} \right|_{\tilde{\epsilon}=0}}_{(F)}\}p(\epsilon|t,\mathbf{x}_0, \mathbf{C}_f)p(t,\mathbf{x}_0, \mathbf{C}_f)p(\tilde{\epsilon})\, dt\, d\mathbf{x}_0\,d\epsilon\,d\mathbf{C}_f\, d\tilde{\epsilon} \;\nonumber\\
      & \text{(expand the square term)} \label{eq:et3} \\ 
      &= \int\int\int\int\int\{ \underbrace{(y(\mathbf{C}_f)-\epsilon)^2}_{(A)} + \underbrace{(\sum_i\tilde{\epsilon_i}\left. \frac{\partial y}{\partial \mathbf{C}_{f,i}} \right|_{\tilde{\epsilon}=0})^2}_{(B)} + \underbrace{2(y(\mathbf{C}_f)-\epsilon)\sum_i\tilde{\epsilon_i}\left. \frac{\partial y}{\partial \mathbf{C}_{f,i}} \right|_{\tilde{\epsilon}=0}}_{(D)}+\nonumber \\
      &\underbrace{(y(\mathbf{C}_f)-\epsilon)\sum_i\sum_j\tilde{\epsilon_i}\tilde{\epsilon_j}\left. \frac{\partial^2 y}{\partial \mathbf{C}_{f,i}\partial \mathbf{C}_{f,j}} \right|_{\tilde{\epsilon}=0}}_{(E)}\}p(\epsilon|t,\mathbf{x}_0, \mathbf{C}_f)p(t,\mathbf{x}_0, \mathbf{C}_f)p(\tilde{\epsilon})\, dt\, d\mathbf{x}_0\, d\epsilon\, d\mathbf{C}_f\, d\tilde{\epsilon} \nonumber \\
      &\text{(omit (C) and (F) as they are of order higher than $O(\tilde{\epsilon}^3)$)} \label{eq:et4} \\
      &=\int\int\int\int\int \underbrace{(y(\mathbf{C}_f)-\epsilon)^2}_{(A)}p(\epsilon|t,\mathbf{x}_0, \mathbf{C}_f)p(t,\mathbf{x}_0, \mathbf{C}_f)p(\tilde{\epsilon})\, dt\, d\mathbf{x}_0\, d\epsilon\, d\mathbf{C}_f\, d\tilde{\epsilon}+\nonumber \\
      &\int\int\int\int\int\underbrace{(\sum_i\tilde{\epsilon_i}\left. \frac{\partial y}{\partial \mathbf{C}_{f,i}} \right|_{\tilde{\epsilon}=0})^2}_{(B)}p(\epsilon|t,\mathbf{x}_0, \mathbf{C}_f)p(t,\mathbf{x}_0, \mathbf{C}_f)p(\tilde{\epsilon})\, dt\, d\mathbf{x}_0\, d\epsilon\, d\mathbf{C}_f\, d\tilde{\epsilon}+\nonumber\\
      &\int\int\int\int\int \underbrace{2(y(\mathbf{C}_f)-\epsilon)\sum_i\tilde{\epsilon_i}\left. \frac{\partial y}{\partial \mathbf{C}_{f,i}} \right|_{\tilde{\epsilon}=0}}_{(D)}p(\epsilon|t,\mathbf{x}_0, \mathbf{C}_f)p(t,\mathbf{x}_0, \mathbf{C}_f)p(\tilde{\epsilon})\, dt\, d\mathbf{x}_0\, d\epsilon\, d\mathbf{C}_f\, d\tilde{\epsilon}\nonumber\\
      &+ \int\int\int\int\int \underbrace{(y(\mathbf{C}_f)-\epsilon)\sum_i\sum_j\tilde{\epsilon_i}\tilde{\epsilon_j}\left. \frac{\partial^2 y}{\partial \mathbf{C}_{f,i}\partial \mathbf{C}_{f,j}} \right|_{\tilde{\epsilon}=0}}_{(E)}\}p(\epsilon|t,\mathbf{x}_0, \mathbf{C}_f)p(t,\mathbf{x}_0, \mathbf{C}_f)p(\tilde{\epsilon})\, dt\, d\mathbf{x}_0\, \nonumber \\
      &d\epsilon\, d\mathbf{C}_f\, d\tilde{\epsilon}\text{(distribute the integrals )} \label{eq:et5}\\
      &= \int\int\int\int \underbrace{(y(\mathbf{C}_f)-\epsilon)^2}_{(A)}p(\epsilon|t,\mathbf{x}_0, \mathbf{C}_f)p(t,\mathbf{x}_0, \mathbf{C}_f)\, dt\, d\mathbf{x}_0\, d\epsilon\, d\mathbf{C}_f\;\text{(omit $\int p(\tilde{\epsilon})\, d\epsilon$ as equals 1)} \nonumber \\
      &+ \eta^2\int\int\int\int\sum_i (\frac{\partial y}{\partial \mathbf{C}_{f,i}})^2p(\epsilon|t,\mathbf{x}_0, \mathbf{C}_f)p(t,\mathbf{x}_0, \mathbf{C}_f)\, dt\, d\mathbf{x}_0\, d\epsilon\, d\mathbf{C}_f\nonumber \\
      &\text{(change $\int\tilde{\epsilon_i}^2p(\tilde{\epsilon})\, d\tilde{\epsilon} = Var(\tilde{\epsilon_i}) = \eta^2$ in (B) from Equality~\eqref{eq:eplisontilde})}\nonumber \\
      &+ \eta^2\int\int\int\int\sum_i (y(\mathbf{C}_f)-\epsilon)\frac{\partial^2 y}{\partial \mathbf{C}_{f,i}^2}p(\epsilon|t,\mathbf{x}_0, \mathbf{C}_f)p(t,\mathbf{x}_0, \mathbf{C}_f)\, dt\, d\mathbf{x}_0\, d\epsilon\, d\mathbf{C}_f \; \nonumber \\
      &\text{(omit (D) term as it contains $\int \tilde{\epsilon_i} P(\tilde{\epsilon})\, d\tilde{\epsilon}$, which is the expectation of $\tilde{\epsilon}$ equal to $0$ by Equality~\eqref{eq:eplisontilde}, } \nonumber \\
      & \text{and change (E) as $\int \tilde{\epsilon_i}\tilde{\epsilon_j}P(\tilde{\epsilon})\, d\tilde{\epsilon}=\eta^2\delta_{ij}=\int \tilde{\epsilon_i}^2P(\tilde{\epsilon})\, d\tilde{\epsilon} = Var(\tilde{\epsilon_i}) = \eta^2$ )}\label{eq:et6} \\
      &= \mathcal{L} + \eta^2 \mathcal{L}^R\,, \label{eq:et7}
      \end{align}
      where
      \begin{align}
       \mathcal{L}^R = \int\int\int\int\sum_i \{ (\frac{\partial y}{\partial \mathbf{C}_{f,i}})^2 + (y(\mathbf{C}_f)-\epsilon)\frac{\partial^2 y}{\partial \mathbf{C}_{f,i}^2} \}p(\epsilon|t,\mathbf{x}_0, \mathbf{C}_f)p(t,\mathbf{x}_0, \mathbf{C}_f)\, dt\, d\mathbf{x}_0\, d\epsilon\, d\mathbf{C}_f. \label{eq:33}
\end{align}
\textbf{Showing that the noised training objective takes a form similar to $L_2$ regularization.}
We define two terms:
\begin{align}
    & \langle\epsilon | t, \mathbf{x}_0, \mathbf{C}_f\rangle = \int \epsilon p(\epsilon | t, \mathbf{x}_0, \mathbf{C}_f) d\epsilon 
    & \langle\epsilon^2 | t, x_0, \mathbf{C}_f\rangle = \int \epsilon^2 p(\epsilon | t, \mathbf{x}_0, \mathbf{C}_f) d\epsilon \label{eq:EandEsquare}
\end{align}
Then, we expand Equation~\eqref{eq:E} as follows:
\begin{align}
    \mathcal{L} &= \int\int\int\int\{ y(\mathbf{C}_f) - \epsilon \}^2p(\epsilon|t,\mathbf{x}_0, \mathbf{C}_f)p(t,\mathbf{x}_0, \mathbf{C}_f)\, dt\, d\mathbf{x}_0\, d\epsilon\, d\mathbf{C}_f \nonumber \\
    &= \int\int\int\int\{ y(\mathbf{C}_f)^2 - 2y(\mathbf{C}_f)\epsilon + \epsilon^2 \}p(\epsilon|t,\mathbf{x}_0, \mathbf{C}_f)p(t,\mathbf{x}_0, \mathbf{C}_f)\, dt\, d\mathbf{x}_0\, d\epsilon\, d\mathbf{C}_f \;\nonumber \\
    &\text{(expand the square term)}\nonumber \\
    &= \underbrace{\int\int\int\int y(\mathbf{C}_f)^2 p(\epsilon|t,\mathbf{x}_0, \mathbf{C}_f)p(t,\mathbf{x}_0, \mathbf{C}_f)\, dt\, d\mathbf{x}_0\, d\epsilon\, d\mathbf{C}_f}_{(A)}  \nonumber \\
    &-\underbrace{2\int\int\int\int y(\mathbf{C}_f)\epsilon p(\epsilon|t,\mathbf{x}_0, \mathbf{C}_f)p(t,\mathbf{x}_0, \mathbf{C}_f)\, dt\, d\mathbf{x}_0\, d\epsilon\, d\mathbf{C}_f}_{(B)} \nonumber \\
    & +\underbrace{\int\int\int\int\ \epsilon^2 p(\epsilon|t,\mathbf{x}_0, \mathbf{C}_f)p(t,\mathbf{x}_0, \mathbf{C}_f)\, dt\, d\mathbf{x}_0\, d\epsilon\, d\mathbf{C}_f}_{(C)}\;\text{(expand terms)}\nonumber \\
    &= \underbrace{\int\int\int y(\mathbf{C}_f)^2 p(t,\mathbf{x}_0, \mathbf{C}_f)\, dt\, d\mathbf{x}_0\, d\mathbf{C}_f}_{(A)} - \underbrace{2\int\int\int y(\mathbf{C}_f)\langle\epsilon | t, \mathbf{x}_0, \mathbf{C}_f\rangle p(t,x_0, \mathbf{C}_f)\, dt\, dx_0\, d\mathbf{C}_f}_{(B)}  \nonumber \\
    &+ \underbrace{\int\int\int\langle\epsilon^2 | t, \mathbf{x}_0, \mathbf{C}_f\rangle P(t,\mathbf{x}_0, \mathbf{C}_f)\, dt\, d\mathbf{x}_0\, d\mathbf{C}_f}_{(C)}  \nonumber \\
    & \text{((A): $\int p(\epsilon|t,\mathbf{x}_0, \mathbf{C}_f)d\epsilon=1$, (B): substitute Equation~\eqref{eq:EandEsquare}, (C): substitute Equation~\eqref{eq:EandEsquare})} \nonumber \\ 
    &= \underbrace{\int\int\int y(\mathbf{C}_f)^2 p(t,\mathbf{x}_0, \mathbf{C}_f)\, dt\, d\mathbf{x}_0\, d\mathbf{C}_f}_{(A)} - \underbrace{2\int\int\int y(\mathbf{C}_f) \langle\epsilon | t, \mathbf{x}_0, \mathbf{C}_f\rangle p(t,\mathbf{x}_0, \mathbf{C}_f)\, dt\, d\mathbf{x}_0\, d\mathbf{C}_f}_{(B)}  \nonumber \\
    &+ \underbrace{\int\int\int \{ Var(\epsilon | t, \mathbf{x}_0, \mathbf{C}_f) + \langle\epsilon | t, \mathbf{x}_0, \mathbf{C}_f\rangle^2 \} p(t,\mathbf{x}_0, \mathbf{C}_f)\, dt\, d\mathbf{x}_0\, d\mathbf{C}_f}_{(C)}  \nonumber \\
    &\text{(rewrite (C) with the equation $Var(\epsilon | t, \mathbf{x}_0, \mathbf{C}_f) = \langle\epsilon^2 | t, \mathbf{x}_0, \mathbf{C}_f\rangle - (\langle\epsilon | t, \mathbf{x}_0, \mathbf{C}_f\rangle)^2$)} \nonumber \\
    &= \int\int\int \{ y(\mathbf{C}_f) - \langle\epsilon | t, \mathbf{x}_0, \mathbf{C}_f\rangle \}^2 p(t,\mathbf{x}_0, \mathbf{C}_f)\, dt\, d\mathbf{x}_0\, d\mathbf{C}_f + \int\int\int Var(\epsilon | t, \mathbf{x}_0, \mathbf{C}_f) p(t,\mathbf{x}_0, \mathbf{C}_f)\, \nonumber \\
    & dt\, d\mathbf{x}_0\, d\mathbf{C}_f\text{(rearrange and combine terms)} \nonumber \\
    &= \int\int\int\int \{ y(\mathbf{C}_f) - \langle\epsilon | t, \mathbf{x}_0, \mathbf{C}_f\rangle \}^2 p(\epsilon|t,\mathbf{x}_0, \mathbf{C}_f)p(t,\mathbf{x}_0, \mathbf{C}_f)\, dt\, d\mathbf{x}_0\, d\epsilon\, d\mathbf{C}_f + \int\int\int\int \nonumber \\
    &\{ \langle\epsilon^2 | t, \mathbf{x}_0, \mathbf{C}_f\rangle-\langle\epsilon | t, \mathbf{x}_0, \mathbf{C}_f\rangle^2 \}p(\epsilon|t,\mathbf{x}_0, \mathbf{C}_f)p(t,\mathbf{x}_0, \mathbf{C}_f)\, dt\, d\mathbf{x}_0\, d\epsilon\, d\mathbf{C}_f\nonumber \\
    &\;\text{(add back $\int p(\epsilon|t,\mathbf{x}_0, \mathbf{C}_f)d\epsilon=1$)} \label{eq:35}
\end{align}
Observe that in Equation~\eqref{eq:35},
only the first term involves the parameters of the neural network (recall that $y(\mathbf{C}_f) =\epsilon_\theta(\mathbf{x}_t, t, \mathbf{C}_{f}) $), while the second term only depends on the ground-truth noise.
Therefore, 
according to Equation~\eqref{eq:35}, $E$ is minimized when 
$y(\mathbf{C}_f) = \langle\epsilon | t, \mathbf{x}_0, \mathbf{C}_f\rangle$.

Moreover, 
recall that $\tilde{\mathcal{L}} = \mathcal{L} + \eta^2 \mathcal{L}^R$ by Equation~\eqref{eq:et7}.
Consider the second term of $\mathcal{L}^R$ in Equation~\eqref{eq:33} and denote it as $\mathcal{L}_2^R$. Then, $\mathcal{L}_2^R$ can be rewritten as follows:
\begin{align}
    \mathcal{L}^R_{2} &= \int\int\int\int\sum_i (y(\mathbf{C}_f)-\epsilon)\frac{\partial^2 y}{\partial \mathbf{C}_{f,i}^2} p(\epsilon|t,\mathbf{x}_0, \mathbf{C}_f)p(t,\mathbf{x}_0, \mathbf{C}_f)\, dt\, d\mathbf{x}_0\, d\epsilon\, d\mathbf{C}_f \nonumber \\
    &= \int\int\int\int\sum_i y(\mathbf{C}_f)\frac{\partial^2 y}{\partial \mathbf{C}_{f,i}^2} p(\epsilon|t,\mathbf{x}_0, \mathbf{C}_f)p(t,\mathbf{x}_0, \mathbf{C}_f)\, dt\, d\mathbf{x}_0\, d\epsilon\, d\mathbf{C}_f -\nonumber \\
    & \int\int\int\int\sum_i \epsilon \frac{\partial^2 y}{\partial \mathbf{C}_{f,i}^2} p(\epsilon|t,\mathbf{x}_0, \mathbf{C}_f)p(t,\mathbf{x}_0, \mathbf{C}_f)\, dt\, d\mathbf{x}_0\, d\epsilon\, d\mathbf{C}_f \;\text{(expand)}\nonumber \\
    &= \int\int\int\int\sum_i y(\mathbf{C}_f)\frac{\partial^2 y}{\partial \mathbf{C}_{f,i}^2} p(\epsilon|t,\mathbf{x}_0, \mathbf{C}_f)p(t,\mathbf{x}_0, \mathbf{C}_f)\, dt\, d\mathbf{x}_0\, d\epsilon\, d\mathbf{C}_f- \nonumber \\
    &\int\int\int\sum_i \langle\epsilon | t, \mathbf{x}_0, \mathbf{C}_f\rangle \frac{\partial^2 y}{\partial \mathbf{C}_{f,i}^2} p(t,\mathbf{x}_0, \mathbf{C}_f)\, dt\, d\mathbf{x}_0\, d\mathbf{C}_f\;\text{(substitute Equation~\eqref{eq:EandEsquare})} \nonumber \\
    &= \int\int\int\int\sum_i y(\mathbf{C}_f)\frac{\partial^2 y}{\partial \mathbf{C}_{f,i}^2} p(\epsilon|t,\mathbf{x}_0, \mathbf{C}_f)p(t,\mathbf{x}_0, \mathbf{C}_f)\, dt\, d\mathbf{x}_0\, d\epsilon\, d\mathbf{C}_f -\int\int\int\int\sum_i  \nonumber \\
    & \langle\epsilon | t, \mathbf{x}_0, \mathbf{C}_f\rangle\frac{\partial^2 y}{\partial \mathbf{C}_{f,i}^2} p(\epsilon|t,\mathbf{x}_0, \mathbf{C}_f) p(t,\mathbf{x}_0, \mathbf{C}_f)\, dt\, d\mathbf{x}_0\, d\epsilon\, d\mathbf{C}_f\;\text{(add back $\int p(\epsilon'|t,\mathbf{x}_0, \mathbf{C}_f)d\epsilon'=1$)} \nonumber \\
    &= \int\int\int\int\sum_i \{ y(\mathbf{C}_f) - \langle\epsilon | t, \mathbf{x}_0, \mathbf{C}_f\rangle \} \frac{\partial^2 y}{\partial \mathbf{C}_{f,i}^2}p(\epsilon|t,\mathbf{x}_0, \mathbf{C}_f)p(t,\mathbf{x}_0, \mathbf{C}_f)\, dt\, d\mathbf{x}_0\, d\epsilon\, d\mathbf{C}_f \nonumber. \\
    &\;\text{(combine terms)}
\end{align}
Therefore, when $\mathcal{L}$ is minimized at $y(\mathbf{C}_f) = \langle\epsilon | t, \mathbf{x}_0, \mathbf{C}_f\rangle$, 
$\mathcal{L}_2^R = 0$. 
As a result, in this case, $\mathcal{L}^R$ can be rewritten to:
\begin{align}
    \mathcal{L}^R &= \int\int\int\int\sum_i (\frac{\partial y}{\partial \mathbf{C}_{f,i}})^2 p(\epsilon|t,\mathbf{x}_0, \mathbf{C}_f)p(t,\mathbf{x}_0, \mathbf{C}_f)\, dt\, d\mathbf{x}_0\, d\epsilon\, d\mathbf{C}_f \nonumber \\
    & = \int\int\int\sum_i (\frac{\partial y}{\partial \mathbf{C}_{f,i}})^2 p(t,\mathbf{x}_0, \mathbf{C}_f)\, dt\, d\mathbf{x}_0\, d\mathbf{C}_f\,,
\end{align}
which is in the Tikhonov form. 
Hence, the proof is completed.
\end{proof}

\subsection{C Additional Details on Experimental Settings}\label{app:exp_settings}

\begin{table}[h]
\centering

\small
% \resizebox{1\textwidth}{!}{%
\begin{tabular}{lrrrrrrr} 
\toprule
\multicolumn{1}{l}{\textbf{Dataset}} & \multicolumn{1}{l}{$\#$\textbf{Train}} & \multicolumn{1}{l}{$\#$\textbf{Test}} & \multicolumn{1}{l}{$\#$\textbf{Num}} & \multicolumn{1}{l}{$\#$\textbf{Cat}}& \multicolumn{1}{l}{$\#$\textbf{Total}} & \multicolumn{1}{l}{\textbf{Task}}&\textbf{$N / 2^{D}$} \\ 
 % &  &  & \textbf{Num} & \textbf{Cat} &\textbf{Total} & \\ 
\midrule
Gesture (\textbf{GE}) & 7,898 & 1,975 & 32 & 0 & 32 & Multi & $9.19e^{-7}$\\
Clean2 (\textbf{CL}) & 5,278 & 1,320 & 166 & 2 & 168 & Binary & $7.05e^{-48}$\\
Malware (\textbf{MA}) & 3,572 & 892 & 241 & 0 & 241 & Binary & $5.05e^{-70}$ \\
Education (\textbf{ED}) & 2,635 & 659 & 52 & 2 & 54 & Binary & $7.31e^{-14}$\\
Unemployment (\textbf{UN}) & 2,621 & 656 & 97 & 2 & 99 & Binary & $2.07e^{-27}$\\
Unemploymentreg (\textbf{UG}) & 2,621 & 656 & 97 & 2 & 99 & Regres & $2.07e^{-27}$\\
Educationreg (\textbf{EG}) & 2,635 & 659 & 52 & 2 & 54 & Regres & $7.31e^{-14}$\\
Stadyn (\textbf{ST})   & 4,998 & 1,250 & 833 & 251 & 1,084 & Binary & $2.5e^{-323}$\\
Arcene  (\textbf{AC})   & 100 & 100 & 10,000 & 1 & 10,001 & Binary & $\approx 0$\\
\bottomrule
\end{tabular}
% }
\caption{Datasets: $\#$Train and $\#$Test refer to the numbers of training and testing rows. $\#$Num and $\#$Cat refer to the numbers of numerical and categorical features, while $\#$Total refers to their sum; ``Task'' specifies the target downstream tasks, where ``Multi'' means multi-class classification, ``Binary'' means binary classification, and ``Regres'' means regression; $N/2^D$ represents the data sparsity, where $N=\#$Train and $D$ is the data dimensionality.} 
\label{tab:datasets} 
\end{table}

\paragraph{Datasets} As summarized in Table~\ref{tab:datasets}, we test our model on the following datasets:
\begin{itemize}
    \item Gesture (\textbf{GE})~\cite{gesture} is a dataset from  human-computer interaction used for gesture phase segmentation which is a multi-class classification task. Each row of the dataset represents the gesture information captured by a device. We use the last categorical column as the class labels while the rest of the columns as the data features for multi-class classification. 
    
    \item Clean2 (\textbf{CL})~\cite{musk2} is a binary classification dataset from the UCI Machine Learning Repository, used to predict whether new molecules are musk or non-musk. Each row of the dataset represents the exact shape, or conformation, of a molecule. We use the last categorical column as the class labels while the rest of the columns as the data features for  classification.
    
    \item Malware (\textbf{MA})~\cite{malware} is another binary classification dataset from the UCI Machine Learning Repository, used to predict whether a software is malware. Each row of the dataset represents software permission information. We use the last column as the class labels while the rest of the columns as the data features for classification.
    
    \item Education (\textbf{ED})~\cite{education} is a dataset from the US Department of Agriculture that provides statistics on the educational attainment of adults aged 25 and older in the USA, across states and counties, from 1970 to 2022. We use this dataset for a binary classification task by transforming the column ``\emph{Percent of adults with a bachelor's degree or higher, 2018-22}'' based on the median value: values greater than the median are labeled as Class 1, while others are labeled as Class 0.
    
    \item Unemployment (\textbf{UN})~\cite{education} is a dataset from the US Department of Agriculture that contains median household income for the the states and counties in the USA from 2000 to 2022. We use this dataset for a binary classification task by converting column ``\textit{County household median income as a percent of State total median household income, 2021}'' into binary labels, where $1$ means ``YES'' and $0$ otherwise. 
    
    \item Unemploymentreg (\textbf{UG})~\cite{education} is the same  unemployment dataset as above. Now we use the raw values of the column ``\emph{County household median income as a percent of State total median household income, 2021}'' to form a regression task.
    
    \item Educationreg (\textbf{EG})~\cite{education} is the same Education dataset as above. Now we use the raw values from the column ``\emph{Percent of adults with a bachelor's degree or higher, 2018-22}''  to form a regression task.
%\end{itemize}
%\zq{Two datasets used in the case study are:
%\begin{itemize}
    \item Stadyn  (\textbf{ST})~\cite{stadyn} is a binary classification dataset from the UCI Machine Learning Repository, used for malware classification. Each row of the dataset represents static and dynamic malware features such as Hex dump.

    \item Arcene  (\textbf{AC})~\cite{arcene} is a binary classification dataset from the UCI Machine Learning Repository, used to distinguish cancer versus normal patterns from mass-spectrometric data. It is a highly sparse dataset containing 10,000 features (7,000 real and 3,000 distractors) but only 100 samples. Each row of the dataset contains multiple features representing the abundance of proteins in human sera with specific mass values.
\end{itemize}

\paragraph{Competitors} We compare \model\ with six models including state-of-the-art (SOTA) diffusion-based tabular data synthesis models:
\begin{itemize}
    \item \textbf{SMOTE}~\cite{smote} is a traditional yet highly effective oversampling technique proposed to address the class imbalance issue by generating synthetic samples for a minority class. It creates new data samples by interpolating between existing samples of the target minority class to increase their representation in the dataset.

    \item \textbf{TVAE}~\cite{ctgan} is a variational autoencoder-based model for synthesizing tabular data.
    
    \item \textbf{CTGAN}~\cite{ctgan} is a GAN-based model that uses conditional information and different normalizations to generate data samples addressing challenges brought by class imbalance and complex data distribution.
    
    \item \textbf{TabDDPM}~\cite{tabddpm} is one of the first models that extend diffusion models to tabular data synthesis. It proposes  different diffusion processes for different types (i.e., numerical and categorical) of data from a table. 
    
    \item \textbf{RelDDPM}~\cite{relddpm} is a recent diffusion-based model based on classifier-guidance. It trains an additional classifier to provide supervision signals to guide the data synthesis process towards a given conditional direction.
    
    \item \textbf{TabSyn}~\cite{tabsyn} is the SOTA model for tabular data synthesis. It uses a B-VAE module to learn an embedding for each sample in a data table. Then, it employs a score-based diffusion model for tabular data synthesis using the learned embeddings.
\end{itemize}

\paragraph{Implementation Details} All experiments were run three times on a virtual machine with a 12-core Intel(R) Xeon(R) Gold 6326 CPU (2.90 GHz), $32$ GB of RAM, and an NVIDIA A100 GPU. We implement \model\ and all competitor models using Python. For the competitors, we adopt the same hyperparameter settings as specified in the TabSyn paper~\citep{tabsyn}.  
We train \model\ with the AdamW optimizer~\citep{adamw} 
using a learning rate of $0.0018$ and weight decay of $0.00001$ with $30,000$ steps, and noise scale 0.005.

\subsection{D Additional Experimental Results}

\paragraph{Additional Ablation Study Results}\label{sec:additionalablation}
Table~\ref{tab:differentmodules2} shows results in AUC and RMSE of CtrTab and six alternative variants of TabSyn as well as  \model-w/o-lastfusion, to complement the F1 and R2 scores show in Table~\ref{tab:differentmodules}.
The comparative patterns in AUC and RMSE are similar to those in F1 and R2, where our \model\ model outperforms all six variants of TabSyn as well as w/o-lastfusion on all datasets tested. This further confirms the effectiveness of our proposed control module in \model, the staged training strategy, and as the last $h_{fusion}$ connection.

{\small
\begin{table}[ht]
\centering

\begin{tabular}{lccccccc}
\toprule
\textbf{Method} & \textbf{GE} & \textbf{CL} & \textbf{MA} & \textbf{ED} & \textbf{UN} & \textbf{UG} & \textbf{EG}  \\
\midrule
\textbf{Metric} & AUC$\uparrow$ & AUC$\uparrow$ & AUC$\uparrow$ & AUC$\uparrow$ & AUC$\uparrow$ & RMSE$\downarrow$ & RMSE$\downarrow$  \\
\midrule
Train$\times 2$ & 0.795 & 0.979 & 0.996 & 0.958 & 0.818 & 0.285 & 0.151 \\
Data$\times 2$  & 0.804 & 0.990 & 0.998 & 0.963 & 0.916 & 0.192 & 0.143 \\
Model$\times 2$ & 0.827 & 0.990 & 0.997 & 0.981 & 0.919 & 0.256 & 0.127 \\
NoiseCond & 0.785 & 0.980 & 0.998 & 0.966 & 0.868 & 0.289 & 0.159 \\
Dropout-Reg & 0.764 & 0.976 & 0.996 & 0.958 & 0.877 & 0.391 & 0.151  \\
JointTrain & 0.835 & 0.990 & 0.992 & 0.960 & 0.847 & 0.360 & 0.127  \\
w/o-lastfusion & 0.886 & 0.999 & \textbf{0.999} & 0.987 & 0.974 & 0.338 & \textbf{0.114} \\
\textbf{\model} & \textbf{0.890} & \textbf{1} & \textbf{0.999} & \textbf{0.988} & \textbf{0.979} & \textbf{0.069} & \textbf{0.114} \\
\bottomrule
\end{tabular}
\caption{Ablation study results in AUC and RMSE.}
\label{tab:differentmodules2}
\end{table}
}

\paragraph{Comparison of Training and Inference Times}
\begin{table}[h!]
   \centering

% \resizebox{0.8\columnwidth}{!}{

\begin{tabular}{lcccccc}
\toprule
\textbf{Model} & \textbf{GE Train} & \textbf{GE Inference} & \textbf{UN Train} & \textbf{UN Inference} & \textbf{AC Train} & \textbf{AC Inference}\\
\midrule
TabSyn & 1584.58s & 2.78s & 1617.68s & 1.86s & 1757s & 85s\\
CtrTab & 2744.58s  & 4.12s & 2936.18s & 2.47s & 3708s & 88s\\
\bottomrule
\end{tabular}
\caption{Training and inference times of TabSyn and \model\ on GE and UN datasets.}
\label{tab:runtime}
\end{table}
We compare the training and inference times of \model\ with a diffusion-based model TabSyn, since our model is implemented based on TabSyn. Diffusion models have a time complexity of $\mathcal{O}(TDLH)$; $T$: steps; $D$: input dim; $L$: depth; $H$: hidden size. Although \model\ contains approximately $1.8$ times more parameters than TabSyn, its runtime remains in the same order of magnitude, as shown in Table~\ref{tab:runtime}. We report both training and inference times on the GE, UN and AC  datasets for conciseness -- these are the first 
classification, regression and highest dimensional datasets in Table~\ref{tab:mlperformance}.

\paragraph{Results on the Impact of Training Set Size}

Table \ref{tab:nsamplestest2} reports results on the Adult\textsuperscript{1} (ADU, a widely used conventional dataset with 48,842 instances and 14 columns) and MA (the most sparse among our high-dimensional datasets) datasets, showing how performance varies with training set size as measured by the average of column-wise distributions and pairwise column correlations (scores in the range $[0, 1]$; higher is better), using the two most competitive baselines as shown earlier.

We follow previous work~\citep{tabsyn} and this metric is proposed by SDMetrics. Column-wise density estimation uses Kolmogorov-Sirnov Test (KST) for numerical columns and the Total Variation Distance (TVD) for categorical columns. Pair-wise correlation uses Pearson correlation for numerical columns and contingency similarity for categorical columns. 

Across all training set sizes, \model\ attains the highest scores and maintains a substantial margin over the two competitors. Moreover, its performance degrades only marginally as the amount of real data shrinks, demonstrating \model's strong robustness to limited supervision. This resilience is most evident on the MA dataset, which is high-dimensional and highly sparse: even when only 10\% of the original training data (357 samples) is used, \model\ still captures meaningful patterns and produces synthetic data that closely mirrors the real distribution.

\begin{table}[H]
    \centering

    {\small
    \begin{tabular}{lcccc}
        \toprule
         \textbf{Method} & \textbf{10\%} & \textbf{30\%} & \textbf{50\%} & \textbf{100\%} \\
        \midrule
        \multicolumn{5}{l}{\textbf{ADU Dataset}} \\
        \midrule
        \textbf{SMOTE}      & 94.0\% & 93.5\% & 94.8\% & 97.5\% \\
        \textbf{TabSyn}     & 94.5\% & 97.6\% & 98.1\% & 98.9\% \\
        % \textbf{\model}     & \textbf{98.7\%} & \textbf{98.4\%} & \textbf{98.6\%} & \textbf{98.9\%} \\
        \textbf{\model}     & \textbf{99.0\%} & \textbf{99.6\%} & \textbf{99.6\%} & \textbf{99.7\%} \\
        \midrule
        \multicolumn{5}{l}{\textbf{MA Dataset}} \\
        \midrule
        \textbf{SMOTE}      & 97.3\% & 97.9\% & 99.4\% & 99.3\% \\
        \textbf{TabSyn}     & 96.8\% & 97.4\% & 96.4\% & 96.7\% \\
        % \textbf{\model}     & \textbf{99.3\%} & \textbf{99.2\%} & \textbf{99.7\%} & \textbf{99.4\%} \\
        \textbf{\model}     & \textbf{99.3\%} & \textbf{99.2\%} & \textbf{99.8\%} & \textbf{99.8\%} \\
        \bottomrule
    \end{tabular}
    }
    \caption{Impact of training set size on data density performance for the ADU and MA datasets.}
    \label{tab:nsamplestest2}
\end{table}

\paragraph{Results on Lower-Dimensional Datasets}
To verify that our module is not limited to high-dimensional settings, we evaluate it on the same datasets used by the TabSyn paper~\citep{tabsyn}. These datasets are Adult (ADU, 48,842 instances and 14 columns), Default (DEF, 30,000 instances and 25 columns), Shoppers (SHO, 12,330 instances and 18 columns), Magic (MAG, 19,019 instances and 11 columns), Beijing (BEI, 43,824 instances and 12 columns), News (NEW, 39,644 instances and 48 columns). We follow the training/test set settings in the TabSyn paper~\citep{tabsyn} and compare TabSyn against \model, running \model\ with its default hyper-parameters and a noise scale of $0.005$. Table~\ref{tab:downstreamregular} presents machine learning test (same procedure as before) results using data synthesized by the two models. As the table shows, \model\ also delivers strong performance on lower-dimensional datasets. This validates the applicability of \model\ beyond its targeted sparse, high-dimentional settings.

\begin{table}[h]
\centering

\begin{tabular}{ccccccc|c}
\toprule
\textbf{Methods} & \textbf{ADU} & \textbf{DEF} & \textbf{SHO} & \textbf{MAG} & \textbf{BEI} & \textbf{NEW} & \textbf{Average Gap} \\
\midrule
 & AUC $\uparrow$ & AUC $\uparrow$ & AUC $\uparrow$ & AUC $\uparrow$ & RMSE $\downarrow$ & RMSE $\downarrow$ & $\% \downarrow$\\
\midrule
Real    &   0.927   &  0.770   &  0.926  &  0.946    &  0.432   & 0.842 & 0\%   \\
\midrule
TabSyn    &   0.911   &  0.762   &  0.920  &  0.934    &  0.685   & 0.854 & 10.78\%   \\
\model     &  \textbf{0.917}   &  \textbf{0.768}   &   \textbf{0.928}   &  \textbf{0.947}   &  \textbf{0.531}   & \textbf{0.826} & \textbf{4.04\%} \\
\bottomrule
\end{tabular}
\caption{Downstream results on non-high-dimensional datasets}
\label{tab:downstreamregular}
\end{table}

\paragraph{Results on Data Visualization}
\begin{figure}[!ht]
    \centering
    \includegraphics[width=0.8\textwidth]{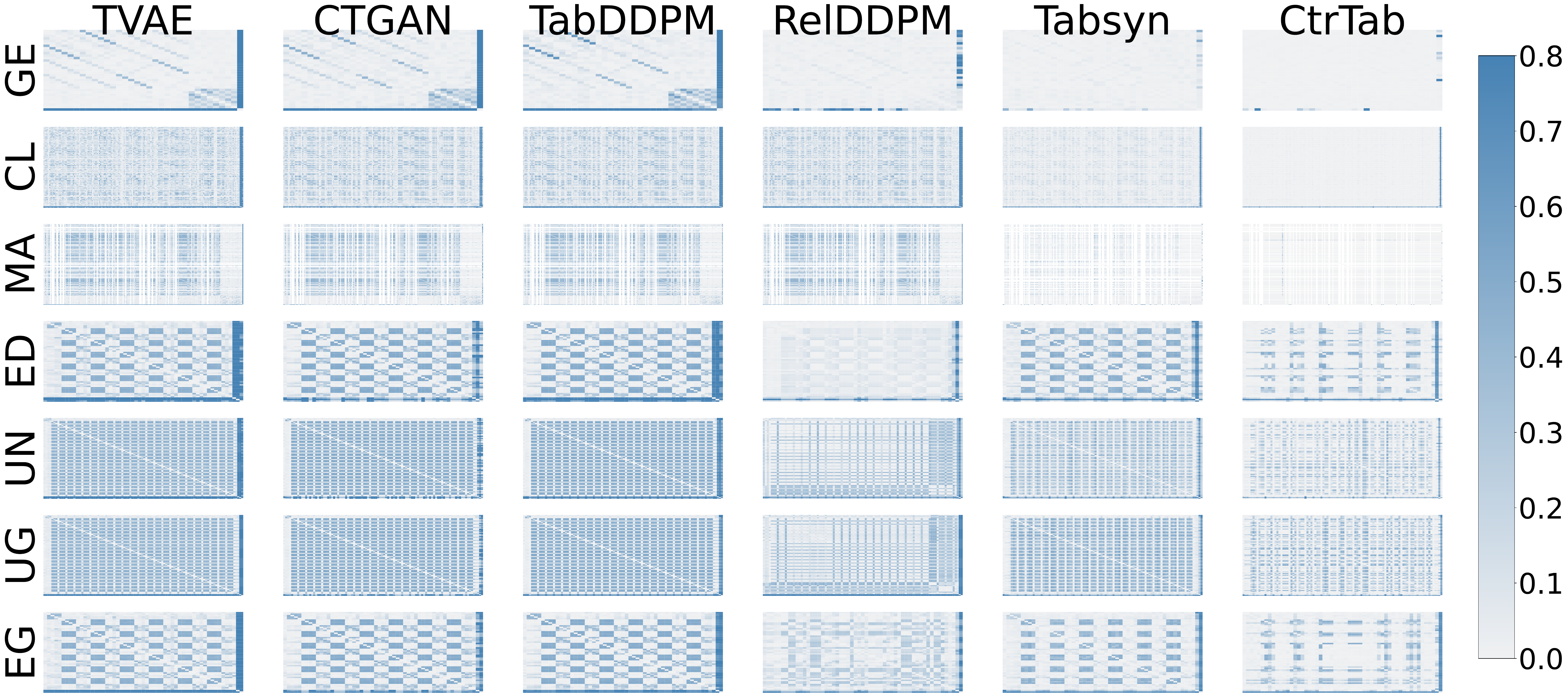} 
    \caption{Heatmaps of the pair-wise column correlation of synthetic data v.s. the real data.}
    \label{fig:heatmap}
\end{figure}

In Figure \ref{fig:heatmap}, we present the divergence heatmap of pairwise column correlations between real datasets and datasets synthesized by different models. The color intensity reflects the correlation error rate for each pair of columns, where lighter colors indicate smaller discrepancies. Notably, these differences are substantially reduced when comparing the baseline Tabsyn model to \model, demonstrating that our proposed control module effectively improves the diffusion model’s capacity to capture the distribution of high-dimensional data. This improvement arises from the explicit guidance provided by the control module, which enhances the model’s ability to represent pairwise column correlations more precisely, especially under sparse and high-dimensional regimes.

\paragraph{Results on Distance to Closest Records}\label{sec:dcr}
Following the TabSyn paper, to ensure that \model\ does not simply replicate the training data, we further measure the Distance to Closest Records (DCR) scores of \model\ and the best baseline SMOTE, which denote the probability that a synthesized sample finds its closest (by $L_1$ distance) sample in the training set (rather than the testing set). For an ideal model, its DCR scores should be close to 0.5 (i.e., the synthesized samples have good chances to be similar to both the training and the test sets). To enable a consistent comparison between DCR values smaller and greater than 0.5, we adopt the normalized version, NDCR = $|DCR - 0.5|$. Table~\ref{tab:privacy} reports the NDCR. 
\model\ outperforms SMOTE on all but one datasets, with  NDCR values that are $1.5\%$ smaller on average. These confirm that \model\ does not simply replicate the training data.   
For the other baselines, their NDCR values are lower since their synthesized data do not fit the original data distribution as shown above. We omit those values for conciseness.  

\begin{table}[ht]
    \centering
    
    {\small
    \setlength{\tabcolsep}{2pt}
    \begin{tabular}{lcccccccc}
        \toprule
        \textbf{Method} & \textbf{GE} & \textbf{CL} & \textbf{MA} & \textbf{ED} & \textbf{UN} & \textbf{UG} & \textbf{EG} & \textbf{Average}\\
        \midrule
        SMOTE             & \textbf{0.477} & 0.500 & 0.342 & 0.500 & 0.500 & 0.500 &0.500 & 0.474\\
        \textbf{\model}     & 0.497 & \textbf{0.479} & \textbf{0.319} & \textbf{0.492} & \textbf{0.499} & \textbf{0.491} & \textbf{0.492} & \textbf{0.467} \\
        \bottomrule
    \end{tabular}
    }
    % }
    \caption{NDCR results (smaller values are preferred).}
    \label{tab:privacy}
\end{table}

\paragraph{Results on the Impact of Noise Types on \model}\label{sec:diffenentnoisetype} 
We investigate how alternative noise distributions injected into the control module of \model\ impacts the quality of the data synthesized. Replacing the default Laplace noise with Normal or Uniform noise, we tune the perturbation scale to match the resulting (normalized) NDCR, and we observe the machine learing test results of the different model variants. Table \ref{tab:differentnoisetypes} summarizes NDCR and downstream machine learning test results. Across all dataset, Laplace noise delivers the most stable and in general the best machine learning effectiveness results—achieving the lowest or tied NDCR while producing high machine learning test scores—verifying the effectiveness of its use in our model.

\begin{table}[t]
\centering
% \resizebox{0.4\columnwidth}{!}{
\begin{tabular}{lcccc}
\toprule
\textbf{Datasets} & \textbf{Noise Type} & \textbf{F1}$\uparrow$ & \textbf{AUC}$\uparrow$ & \textbf{NDCR}$\downarrow$ \\
\midrule
GE          & Normal  & 0.619 & 0.888 & 0.499 \\
            & Laplace & \textbf{0.635} & \textbf{0.890} & 0.497 \\
            & Uniform & \textbf{0.635} & \textbf{0.890} & 0.499 \\
CL          & Normal  & 0.978 & 0.999 & 0.480 \\
            & Laplace & \textbf{0.983} & \textbf{1.000} & 0.479 \\
            & Uniform & 0.978 & 0.999 & 0.477 \\
MA          & Normal  & \textbf{0.994} & \textbf{1} & 0.319 \\
            & Laplace & \textbf{0.994} & 0.999 & 0.319 \\
            & Uniform & 0.992 & 0.999 & 0.319 \\
ED          & Normal  & 0.912 & \textbf{0.988} & 0.492 \\
            & Laplace & \textbf{0.918} & \textbf{0.988} & 0.492 \\
            & Uniform & 0.917 & 0.985 & 0.492 \\
UN          & Normal  & 0.890 & \textbf{0.980} & 0.499 \\
            & Laplace & \textbf{0.898} & 0.979 & 0.499 \\
            & Uniform & \textbf{0.898} & 0.978 & 0.498 \\
\midrule
%\midrule
\textbf{Datasets} & \textbf{Noise Type} & \textbf{R2}$\uparrow$ & \textbf{RMSE}$\downarrow$ & \textbf{NDCR}$\downarrow$ \\
\midrule
UG          & Normal  & 0.988 & 0.083 & 0.493 \\
            & Laplace & \textbf{0.991} & \textbf{0.069} & 0.491 \\
            & Uniform & 0.986 & 0.091 & 0.491 \\
EG          & Normal  & 0.606 & 0.122 & 0.491 \\
            & Laplace & \textbf{0.657} & \textbf{0.114} & 0.492 \\
            & Uniform & 0.639 & 0.117 & 0.491 \\
\bottomrule
\end{tabular}
% }
\caption{Comparison of using Normal noise, Laplace noise and Uniform noise for \model\ on different datasets.
Under the condition of maintaining similar (normalized) NDCR values, we test the impact of the different types of noise on the data synthesis results using the machine learning test.}
\label{tab:differentnoisetypes}
\end{table}
% Check whether the conference requires a reproducibility checklist to be included in the paper.
% If so, you can uncomment the following line and ajust the path to include it.
% \input{../../ReproducibilityChecklist/LaTeX/ReproducibilityChecklist.tex}

\end{document}